\theoremstyle{definition}
\newcommand{\gor}{\rightarrow}
\newcommand{\p}{\mathbf{P}}
\newtheorem{lemma}{Lemma}
\newtheorem{theorem}{Theorem}
\newtheorem{corollary}{Corollary}
\newtheorem{definition}{Definition}
\newtheorem{exer            cise}[theorem]{Exercise}
\newtheorem*{lem:main}{Lemma \ref{lem:main}}
\newtheorem*{cor:int}{Corollary \ref{cor:int}}
\begin{document}

\title{Sequential Naive Learning\footnote{Itai acknowledges support from the Ministry of Science and Technology
(grant 19400214). Yakov acknowledges support from the Israel Science Foundation (grant 2021296). Manuel acknowledges the support from the Spanish Ministry of Science, Innovation, and Universities (ref. ECO2015-63711-P) andthe Department of the Economy and Knowledge of the Generalitat de Catalunya (ref. 2017 SGR 1244). }}
\author{Itai Arieli\thanks{%
Faculty of Industrial Engineering and Management, Technion: The Israel Institute of Technology,
iarieli@technion.ac.il}, Yakov Babichenko\thanks{%
Faculty of Industrial Engineering and Management, Technion: The Israel Institute of Technology,
yakovbab@technion.ac.il}, Manuel Mueller-Frank\thanks{%
IESE Business School, University of Navarra, mmuellerfrank@iese.edu}}
\maketitle
\begin{abstract}
We analyze boundedly rational updating from aggregate statistics in a model
with binary actions and binary states. Agents each take an irreversible action in sequence after observing the unordered set of previous actions. Each agent first forms her prior based on the aggregate statistic, then incorporates her signal with the prior based on Bayes rule, and finally applies a decision rule that assigns a (mixed) action to each belief. If priors are formed according to a discretized DeGroot rule, then actions converge to the state (in probability), i.e., \emph{asymptotic learning}, in any informative information structure if and only if the decision rule satisfies probability matching. This result generalizes to unspecified information settings where information structures differ across agents and agents know only the information structure generating their own signal. Also, the main result extends to the case of $n$ states and $n$ actions.
\end{abstract}

\section{Introduction}\label{introduction}
Many decisions under uncertainty require individuals to consider both their own private information and the earlier decisions of other individuals who faced the same decision problem. The inference based on the latter is called observational learning. Important examples of observational learning are social learning, where the earlier choices of friends or social contacts are observed, and aggregate-statistic based learning, where the available information consists of average reviews, sales statistics, and so on. The importance of social learning for ``real-world" decision making has been established for the purchase of consumer goods (see Feick and Price \cite{feick1987market}), agricultural practice (see Munshi \cite{munshi2004social} and Conley and Udry \cite{conley2010learning}), and microfinance (see Banerjee, Chandrasekhar, Duflo and Jackson \cite{banerjee2013diffusion}), among others. Similarly, there is ample empirical evidence for aggregate statistics impacting consumer choices;  examples include book best-seller lists (Bonabeau \cite{bonabeau2004perils}), software download counts (Hanson and Putler \cite{hanson1996hits}), online reviews (Huang and Chen \cite{huang2006herding}), and opening weekend movie ticket sales (Moretti \cite{moretti2011social}).

In the literature, two predominant approaches to the study of observational learning have emerged: Bayesian and boundedly rational. In the Bayesian approach agents make fully rational inferences on the private information of all agents based on the observed decisions.
Despite being valuable as a benchmark, the Bayesian approaches faces severe limitations. First, Hazla, Jadbabaie, Mossel and Rahimian \cite{hkazla2019reasoning} have shown Bayesian updating in social learning settings to be highly computationally intense. To clarify the complexity, note that in a social learning model the information each agent bases her action upon has two components: the past actions she observes and a private signal. The complexity arises in the inference from the observed past actions to the underlying information. This task is complicated by itself but even more so for aggregate statistic-based learning since the aggregate statistic provides no direct information about the observations each action was based upon. Second, Bayesian inference crucially relies upon knowing either the state-dependent distributions that generate the signals of each agent, or a distribution over the state-dependent distributions. Thus Bayesian updating is limited by agents' computational sophistication and more fundamentally by their knowledge of the environment 

Since there is empirical validation for aggregate statistics impacting real-world decisions, which due to the limitations described above are unlikely to be based on Bayesian updating\footnote{For experimental evidence on the failure of Bayesian updating in social learning environments see Kübler and Weizsäcker \cite{kubler2004limited}, Eyster, Rabin, and Weizsäcker \cite{eyster2015experiment}, March and Ziegelmeyer \cite{march2015even}, and Enke and Zimmermann \cite{enke2019correlation}.}, a formal study of this case is warranted. While the literature on Bayesian and boundedly rational social learning is extensive, to the best of our knowledge there is no thorough analysis of boundedly rational observational learning based on aggregate statistics. This paper can be seen as a first step toward developing the theoretical foundations and implications of aggregate statistics-based boundedly rational observational learning. 

%The two core research questions we address in this paper are the following. First, what are the properties of aggregate statistics based boundedly rational heuristics (if any) that enable long run actions to optimally incorporate all privately held information, i.e. asymptotic learning? Second, in settings where the information structure is unspecified, i.e., where agents do not know how the signals of others are generated and indeed signals are not conditionally identically distributed, are there boundedly rational heuristics that can generate asymptotic learning? \Yakov{ It looks to me as the same question. I suggest the following rewriting:

The core research question we address in this paper is the following:
Do there exist aggregate statistics-based boundedly rational heuristics that enable long-run actions to optimally incorporate all privately held information, i.e. asymptotic learning? We address this question in a setting with conditionally independent and identically distributed signals as well as in an unspecified information structure setting where agents do not know how the signals of others are generated.

We borrow the standard herding model with binary states and binary actions employed in the social learning literature.\footnote{See, for example, Bikhchandani, Hirschleifer, and Welch \cite{bikhchandani1992theory}, Smith and Sorensen \cite{smith2000pathological}, Mossel, Sly and Tamuz \cite{mossel2015strategic}, and Bohren \cite{bohren2016informational}.}   
The binary herding model is both well suited to our analysis and objective since in many cases the observed aggregate statistics are binary. For example, adoption statistics and sales statistics are binary in the case of duopolies. However, in order to make our analysis more complete we generalize our main result to a general finite set of states and actions.   
%First, any real world environment where aggregate statistics are observed will feature finite actions. For example, adoption statistics are binary and sales statistics are binary in case of duopolies. 
%Second, it is well-known that environments with rich actions have generally better information aggregation outcomes than those with coarse actions, for both boundedly rational and Bayesian updating.\footnote{See Lee \cite{lee1993convergence}, Arieli and Mueller-Frank \cite{arieli2017inferring}, and Mueller-Frank and Neri \cite{mueller2020general}.} In part this is due to a richer action set allowing for a more detailed inference on the information underlying the actions. Thus we are studying asymptotic learning in the environment least conducive to it. 

The precise model we analyze can be described as follows. A countable infinite set of agents take an irreversible binary action  sequentially. Each agent receives a payoff of $1$ if her action matches the realized state and a payoff of $0$ otherwise. Prior to taking her action, each agent observes the unordered set of previous actions and a private signal that is drawn independently across agents conditional on the realized state. Each agent takes an action based on an updating heuristic in three steps. First, a \emph{prior formation heuristic} assigns a prior belief over the state space to the observed aggregate statistic. Second, a \emph{posterior formation heuristic} aggregates the prior with the private signal into a posterior. Third, a \emph{decision rule heuristic } maps the posterior of an agent to a mixed strategy.\footnote{In most of the existing literature there is no distinction between the prior formation, signal aggregation, and decision rule heuristics. In the DeGroot model the average of observed beliefs is the new belief. In quasi-Bayesian updating the selected action maximizes the expected utility maximizing action given the quasi-Bayesian belief. Arieli, Babichenko, and Mueller-Frank \cite{arieli2019naive} is an exception that allows for general decision rules, albeit in the repeated interaction setting.}
Each triple of a prior formation, a posterior formation, and a decision rule heuristic defines an updating heuristic. In this paper we focus on the following subclass of updating heuristics:
%that are induced by the following choices:

\paragraph{Prior formation} We analyze the asymptotic learning properties of the two most common prior formation functions studied in the literature in different settings: DeGroot prior formation and quasi-Bayesian prior formation.
   
Under a DeGroot updating heuristic \cite{degroot1974reaching}  continuous beliefs (or actions) are updated by averaging over the observed set of beliefs (or actions).\footnote{The literature on DeGroot updating is too vast to cite. A few notable examples are, DeMarzo, Vayanos, and Zwiebel \cite{demarzo2003persuasion} and Golub and Jackson \cite{golub2010naive,golub2012homophily}.} We adjust DeGroot updating to the binary action setting as follows: the DeGroot prior belief of, say, state $\omega=1$ is equal to the proportion of observed actions that equal $1$.\footnote{For the discretized version of DeGroot see Chandrasekhar, Larreguy and Xandri \cite{chandrasekhar2020testing}, Leshem and Scaglione \cite{leshem2018impact}, and Arieli, Babichenko and Mueller-Frank \cite{arieli2019naive}. } Thus DeGroot prior formation is invariant in the properties of the information structure.

Under quasi-Bayesian prior formation, which was first introduced by Eyster and Rabin \cite{eyster2010naive}\footnote{Eyster and Rabin \cite{eyster2010naive} refer to this as \emph{best response trailing naive inference.}} each observed action is treated as if it were a Bayesian best response to only the private signal of the agent taking the action.\footnote{Further studies of this heuristic include Mueller-Frank and Neri \cite{mueller2020general}, Levy and Razin \cite{levy2018information}, Dasaratha and He \cite{dasaratha2020network}, and Arieli, Babichenko, and Shlomov \cite{arieli2019robust}.} For experimental findings supporting the quasi-Bayesian updating heuristic see Eyster, Rabin, and Weizsacker \cite{eyster2015experiment}, Mueller-Frank and Neri \cite{mueller2020general} and Dasaratha and He \cite{dasaratha2019experiment}. Since agents observe only the unordered set of past actions, the identities of the agents taking a given action are not observed. This limits quasi-Bayesian updating to the conditionally i.i.d. signal case.\footnote{Unless agents have a distribution over the order with which actions are taken.}
    
\paragraph{Posterior formation} Here we focus on a single heuristic: the Bayesian one. That is, 
%given a prior choice 
the signal is aggregated in a Bayesian manner for the given prior. Such a modeling choice is common in the literature on boundedly rational learning; see, e.g., Eyster and Rabin\cite{eyster2010naive}, Guarino and Jehiel \cite{guarino2013social}, Jadbabaie, Molavi, Sandroni and Tahbaz-Salehi\cite{jadbabaie2012non}, and Dasaratha and He \cite{dasaratha2020network}.

\paragraph{Decision rule} The decision rule assigns a mixed strategy to every posterior belief, as opposed to a deterministic strategy as is the norm in the literature. We impose no restrictions on the decision rule beyond some mild natural assumptions of monotonicity and semi-continuity.

We are interested in the information aggregation properties of the long-run actions when agents follow a common updating heuristic. We say that such a heuristic satisfies \emph{asymptotic learning} if the probability of agents selecting the optimal action converges to one along the sequence of agents.

Our main result, Theorem \ref{th:main}, shows that if the prior formation function is DeGroot, then asymptotic learning holds for \emph{every} (conditionally i.i.d.) information structure if and only if the decision rule equals probability matching. 
%\Yakov{ The selling point of "overperforming the Bayesian" was not too successful in our previous paper. I suggest putting less emphasis on this issue and postpone the below comparison with Bayesian learning to Section 2.} 
Surprisingly, Theorem \ref{th:main} finds that in the binary action setting combining the two most common prior and posterior formation heuristics, a probability matching decision rule achieves asymptotic learning in any information structure, and is the only decision rule to do so. This differs sharply from the asymptotic learning properties of Bayesian learning. Smith and Sorensen \cite{smith2008rational} show that asymptotic learning fails in the model considered here if private signals are bounded.\footnote{Private signals are bounded if the support of the posterior belief conditional on the signal does not contain 0 or 1.} Instead, the boundedly rational heuristic combining DeGroot prior formation with probability matching achieves asymptotic learning for bounded \emph{and} unbounded signal structures. Also, Theorem \ref{th:main} generalizes to the general case of $n$ states and $n$ actions, another distinction from the Bayesian learning case.

Next we consider the case of unspecified information structures. Signals are conditionally independent but not identically distributed across agents, and each agent knows only the information structure generating her own signal. For unspecified information structures, Theorem \ref{th:gl} shows that DeGroot prior formation paired with probability matching achieves asymptotic learning, if the signal distribution of each agent satisfies a weak informativeness property. 
Our results can be seen as a normative foundation for the DeGroot heuristic and the probability matching decision rule. Jointly they achieve learning in environments where Bayesian learning fails to do so, and even in environments where Bayesian learning is not applicable due to insufficient information about the information structure. 

Finally, Theorem \ref{th:eyster} shows that asymptotic learning fails under quasi-Bayesian prior formation, for every decision rule, in every (conditionally i.i.d.) information structure. Under quasi-Bayesian prior formation the prior formation is tailored to the information structure. By contrast DeGroot prior formation is invariant in the information structure. Nevertheless, DeGroot prior formation can achieve asymptotic learning in any information structure while quasi-Bayesian prior formation fails to achieve asymptotic learning in \emph{every} information structure.

%{\color{red} In our model, each agent only acts once and thus the long run outcome does not concern her. So why should agents follow the heuristic we propose? Let us briefly address this question for each of the two building blocks of our heuristic from an equilibrium point of view. Assuming that everyone else applies our heuristic, DeGroot prior formation is a best response as it enables agents to approximately learn the state, particularly so for late agents. However, given an approximately accurate belief, engaging in probability matching is \emph{not} optimal. The agent could (marginally) benefit by deviating from probability matching and instead best-responding to his belief by selecting the action which maximizes his expected utility. Probability matching violates rational behavior for this very reason. Nevertheless, there is ample evidence for individuals engaging in probability matching in a wide arrange of environments\footnote{For two surveys summarizing the literature see Myers \cite{myers2014probability} and Vulkan \cite{vulkan2000economist}.}, a fact that has puzzled researchers for more than half a century.\footnote{See Edwards \cite{edwards1961probability} and Arrow \cite{arrow1958utilities}.}
%} \Yakov{ I suggest to erase this discussion. Talking about equilibrium and approximate equilibrium considerations in a boundedly rational models seems awkward.}

The DeGroot belief-updating heuristic is arguably the most prominent in the literature and it is compelling due to its simplicity. Our results highlight that even in coarse action environments it enables long-run agents to approximately learn the realized state from an unordered set of binary actions (that are conditionally dependent) without any knowledge of the signal structures underlying them.\footnote{If all agents included in the group apply the DeGroot probability matching heuristic.} Thus DeGroot prior formation is an attractive proposition for agents, particularly in light of Theorem \ref{th:eyster}. Similarly, our results favor the use of probability matching as a decision rule in an observational learning environment due to its positive learning externality.  
%{\color{red} 
%The case for the use of probability matching is slightly different. A late agent observing the aggregate statistic and applying DeGroot prior formation approximately knows the state. Thus he could (marginally) benefit by deviating from probability matching and instead best-responding to his belief by selecting the action which maximizes his expected utility. So why should she do so? Probability matching is a well-known behavioral bias that individuals are prone to engage in a wide range of environments.\footnote{For two surveys summarizing the literature see Myers \cite{myers2014probability} and Vulkan \cite{vulkan2000economist}.} The motivation behind probability matching has puzzled researchers for more than half a century.\footnote{See Edwards \cite{edwards1961probability} and Arrow \cite{arrow1958utilities}.} Our results highlight the positive learning externality of probability matching. Thus the behavioral prevalence could be explained by an implicit altruistic payoff component that individuals are inadvertedly subject to. 
%} \Yakov{ Same discussion which I suggest to erase too.}

The rest of the paper is organized as follows. Section \ref{literature} discusses the relation 
to the literature. Section \ref{model} introduces the model. Section \ref{section:main results} 
presents our asymptotic learning results for DeGroot and quasi-Bayesian prior formation. Section \ref{sec:extensions} establishes the asymptotic learning result for signals that are not identically distributed conditional on the state and discusses further extensions. Section \ref{sec:conclusion} concludes with a discussion and interpretation of our results and some thoughts on future work. All proofs are relegated to the appendix. 

\section{Related Literature}\label{literature}
We note first that previous work on boundedly rational heuristics that has provided positive information aggregation results considered rich action environments where actions are equal to beliefs or allow a one-to-one inference on the signal.\footnote{See Golub and Jackson \cite{golub2010naive}, Molavi, Tahbaz-Salehi and Jadbabaie \cite{molavi2018theory}, Jadbabaie, Molavi, Sandroni, and Tahbaz-Salehi \cite{jadbabaie2012non}, and Dasaratha and He \cite{dasaratha2020network}.} 
In the binary action case the mapping from  actions to beliefs or signals is considerably coarser than in the rich action case. Thus we analyze the binary action case to understand information aggregation properties of boundedly rational heuristics in environments least favorable for it.
%In the binary action case inference from actions to beliefs or signals necessitates an understanding of the underlying information structure. Thus we analyze the binary action case to understand information aggregation properties of boundedly rational heuristics in environments least favorable for it.

The most related papers to ours are Eyster and Rabin \cite{eyster2010naive} and Dasaratha and He \cite{dasaratha2020network} who study the sequential setting. Both analyze the asymptotic learning properties of the quasi-Bayesian belief updating heuristic paired with the (Bayesian) deterministic decision rule that maximizes expected utility given the belief. Eyster and Rabin \cite{eyster2010naive} pioneered this modeling approach and show that asymptotic learning fails as herds on the incorrect action occur with positive probability. Dasaratha and He \cite{dasaratha2020network} analyze a rich action setting with Gaussian signals. They provide a characterization of the observational network structure that achieves asymptotic learning. Our paper generalizes both the information structure and the class of updating heuristics considered in these related papers. We complement their result by showing that in the binary action setting no decision rule can achieve asymptotic learning under quasi-Bayesian prior formation.

A series of recent papers analyzes the implications of Bayesian agents having mis-specified beliefs about the environment in a social learning environment. Bohren \cite{bohren2016informational} considers the sequential social learning model with binary actions where a proportion $p$ of agents observe the history of actions while others do not. She shows that if agents assume that the proportion is $p'\neq p$, asymptotic learning might fail. Bohren and Hauser \cite{bohren2020social} extend this approach in providing a general framework of misspecified beliefs and how they impact social learning outcomes. Frick, Iijima, and Iishi \cite{frick2020misinterpreting} provide conditions under which learning outcomes are not robust to minimal misspecifications.\footnote{See also Frick, Iijima, and Iishi \cite{frick2020stability}.} Our Theorem \ref{th:gl} extends their approach by considering a unspecified information environment, i.e., where agents have no knowledge of the information structure, but nevertheless achieve asymptotic learning through the DeGroot probability matching heuristic. 

%Leshem and Scaglione \cite{leshem} adopted DeGroot updating to the binary action setting and pairing it with probability matching. They show that agreement eventually occurs in a social learning setting where each agent selects a binary action repeatedly. We apply their updating heuristic to the sequential irreversible choice setting and focus on the asymptotic learning properties.

Finally, our asymptotic learning result extends the scope of previous naive learning results from the rich action environment of Golub and Jackson \cite{golub2010naive} and Jadbabaie, Molavi, Sandroni, and Tahbaz-Salehi \cite{jadbabaie2012non} to a binary action environment.\footnote{Besides the cardinality of the action space, there are other considerable differences as both papers mentioned consider a model of repeated interaction.} In this we coincide with Arieli, Babichenko, and Mueller-Frank \cite{arieli2019naive} who analyze a generalized version of the DeGroot probability matching heuristic in the repeated interaction setting. In contrast to Arieli, Babichenko, and Mueller-Frank \cite{arieli2019naive} we provide a characterization of the decision rules that achieve asymptotic learning under DeGroot prior formation in the sequential herding model and show this decision rule to be probability matching. Further, we differ in considering the unspecified information environment and the result provided there. 

\section{The Model}\label{model}
Our setting is inspired by the standard herding model (see Banerjee \cite{banerjee1992simple}, Bikhchandani, Hirshleifer, and Welch \cite{bikhchandani1992theory}, and Smith and Sorensen \cite{smith2000pathological}). The model is described by the tuple  $(N,\Omega,A,\mu,(F,S))$, where $N$ is a countably infinite set of agents who decide sequentially on a binary action $a\in A=\{0,1\}$. The state of the world $\omega\in\Omega=\{0,1\}$ is drawn at $t=0$ according to the prior $\mu$.
%where state $\omega=1$ is drawn with probability $\mu\in(0,1)$. 
The realized state is not observed by the agents.

%At time $t=0$ the unobserved state of the world $\omega\in\Omega=\{1,0\}$ is realized where state $\omega=1$ is drawn with probability $\mu\in(0,1)$ and state $\omega=0$ with probability $1-\mu$. 

Thereafter, at every time $t\geq 1$ agent $t$ takes an action $a_t$. As in the standard herding model, the utility of an agent is $1$ if her action equals the state and it is $0$ otherwise. Unlike the standard model, we assume that each agent observes ``only" the unordered set of actions that preceded her, as opposed to also the order in which they were taken. That is, the %history 
aggregate statistic observed by agent $t$ can be represented as a pair of numbers $(m,k)$ such that $m+k=t-1$ and where $m$ (respectively $k$) corresponds to the number of $0$ (respectively $1$) actions that are observed by agent $t$. We next describe in detail our assumptions on the information structure and the process according to which actions are taken.

\subsubsection*{The Information Structure}

The \emph{information
structure} of our model is described by the tuple $(\mu,F_{0},F_{1},S)$. We denote the common prior by $\mu$, where $\mu=\Pr \left[ \omega =1\right] $, $S$ is a measurable space of signals, and  $F_\omega\in\Delta(S)$ is a state-dependent distribution over the signals given the state $\omega=0,1$. We throughout assume that signals are informative, i.e., $F_0\neq F_1$.
%We assume that $F_{0}\neq F_{1}$, and that $F_{0}$ and $F_{1}$ are absolutely continuous with respect to each other. \Yakov{ Do we use/need the absolute continuity?}

%At time $t=0$ the unobserved state of the world $\omega\in\Omega=\{1,0\}$ is realized where state $\omega=1$ is drawn with probability $\mu\in(0,1)$ and state $\omega=0$ with probability $1-\mu$. Thereafter at every time $t\geq 1$ agent $t$ takes an action $a_t$. %We assume that agent $t$ knows his position but observes only the unordered set of the $t-1$ actions that preceeded her, as opposed to also the order in which they were taken, as is the case in the standard model.
In addition to observing the unordered history of actions $(m,k)$,
%with $m+k=t-1$, 
each agent $t$ receives a private signal $s_t\in S$ that is drawn independently from the signal of all other agents according to $F_\omega\in\Delta(S).$ Let $\p_{F,\mu}\in\Delta(\Omega\times S)$ denote the measure that is generated by $\mu$ and $F$. 
%For every signal $s\in S$ we let $p(s)=\p_{F,\frac 1 2}(\omega=1|s)$ be the conditional probability of state $\omega=1$ given the signal $s$ and a prior $\mu=\frac 1 2$.  

%a prior $\mu=\frac 1 2$.

\subsubsection*{Belief Updating and Decision Rule}
%Another distinction of our analysis is that% 
%Unlike most of the literature on sequential social learning we depart from the Bayesian paradigm and consider boundedly rational agents for the reasons laid out in the introduction. 
We now introduce a general approach to the agents' deliberations that are finalized in their action choice. The class of heuristics we analyze decouples the belief formation process explicitly from the action choice while sharing properties with Bayesian updating.
%We decompose the procedure according to which an action is chosen by a boundedly rational agent in three steps. Our boundedl We  decouple the belief revision process with the action choice. 
Concretely, we decompose the procedure according to which an action is chosen into three steps: 1) prior formation given the history, 2) posterior formation incorporating the private signal, and 3) the action choice given the posterior belief. We now describe the three steps in detail.

%First, each agent $t$, $t>1$, forms a `prior' belief according to a \emph{belief aggregating function} that maps the set of observed actions to a belief. Next, as common in the non Bayesian literature, agents incorporate their private signal with the prior using the ``corresponding'' Bayes rule.

\textbf{Prior formation:} The (boundedly rational) prior formation given the unordered history occurs according to a \emph{prior formation function}.
Recall that the history observed by agent $t$ can be represented as a pair of numbers $(m,k)$ such that $m+k=t-1$ and where $m$ (respectively $k$) corresponds to the number of $0$ (respectively $1$) actions that are observed by agent $t$. A prior formation function is a mapping $b:\mathbb{N}\times \mathbb{N}\rightarrow[0,1]$ that assigns a prior probability $b(m,k)$ to state $\omega=1$ given the number of observed choices of $0$ and of $1$.
%COMMENT: No further conditions on $b$?

\textbf{Posterior formation:} Each agent incorporates her private signal with the prior to form her posterior belief. We assume that the agent knows the signal-generating distributions $(F_{0},F_{1})$ and applies Bayes' rule to form her posterior belief given her signal, treating the prior as a true Bayesian prior. %The 'posterior' formation given the 'virtual prior' coincides with Bayesian updating assuming that the 'virtual prior' coincides with a Bayesian prior. That is, the agent incorporates the private signal according to Bayes rule given the 'virtual prior' to compute his posterior. 
For every signal $s\in S$, let $p(s)=\p_{F,\frac 1 2}(\omega=1|s)$ be the conditional probability of state $\omega=1$, given the signal $s$ and the uniform prior $\mu=\frac 1 2$. Thus, for every $s\in S$ and a given prior $b(k,m)=b$, the agent's posterior probability of state 1 equals:
\begin{align}\label{eq:bayesian}
   \frac{p(s)b}{p(s)b+(1-p(s))(1-b)}. 
\end{align}

%such that if an agent $t$ observes $k$ actions of $1$ and $m$ actions of $0$ then $b(k,m)$ is the ``virtual prior'' probability of $\omega=1$ for agent $t$ when he observes a fraction of $x$ agents who plays $1$. 

%As common in the literature we assume that the private signal is Incorporated according to Bayes rule with respect to the virtual prior. Thus for every $s\in S$ and a virtual prior $b(k,m)=b$ agent set his posterior as:
%$$\frac{p(s)b}{p(s)b+(1-p(s))(1-b)}.$$

\textbf{Decision rule:} Each agent selects her action given her posterior belief. We assume that the action choice is determined by a \emph{decision rule}, i.e., an increasing function $\sigma:[0,1]\rightarrow[0,1]$ that assigns a mixed strategy to each belief. More precisely, for a posterior $y$ the term $\sigma(y)$ represents the probability that the agent plays action $a=1$, and $1-\sigma(y)$ is the probability that she plays action $a=0$. 
We restrict attention to decision rules %strategies% 
$\sigma$ that are continuous at $0$ and $1$ and piecewise continuous elsewhere in the following sense: 
either it holds at any $x\in(0,1)$ that $\lim_{y\gor x^+}\sigma(y)=\sigma(x)$, or it holds that at any point $x$ $\lim_{y\gor x^-}\sigma(y)=\sigma(x)$. That is, $\sigma$ is either right-continuous at any point or left-continuous at any point. One decision rule that will play a prominent role in our results is \emph{probability matching} where $\sigma(x)=x.$ That is, action $a=1$ is played with a probability that coincides with the posterior probability of state $\omega=1$.
%there exists a partition of the unit interval into finitely many intervals $\{I_i\}_{i\in J}$ each of the intervals has the one of the following forms $I_i=[x_i,y_i], [x_i,y_i), (x_i,y_i]$ or $(x_i,y_i)$ where $x_i<y_i$, and $\sigma$ is continuous on any interval $I_i$. 

Let us briefly discuss our approach in the context of the existing literature. The use of a boundedly rational heuristic that assigns a prior belief to the observed history in combination with the incorporation of the private signal via Bayes' rule is quite common in the literature: for example, Eyster and Rabin \cite{eyster2010naive}, Guarino and Jehiel \cite{guarino2013social}, and Dasaratha and He \cite{dasaratha2020network} employ this approach in the strict sequential irreversible action setting, and Jadbabie et al. \cite{jadbabaie2012non} and Molavi, Tahbaz-Salehi and Jadbabaie \cite{molavi2018theory} in the repeated interaction setting. 
%We differ from most of these papers in the generality of the approach to prior formation capturing a wider range of heuristics.
%, including fully Bayesian updating. 
The main deviation from the boundedly rational literature lies in our formulation of the decision rule so that it allows for randomization. Existing papers overwhelmingly assume, either implicitly or explicitly, that agents best respond to their (boundedly rational) beliefs, resulting in deterministic decision rules.\footnote{The one exception is the companion paper Arieli, Babichenko, and Mueller-Frank \cite{arieli2019naive}, which analyzes the repeated interaction setting.}

%We henceforth assume that the actions of agent $1$ and agent $2$ are deterministically determined such that $a_1\neq a_2$. This assumption is required in order to analyze DeGroot prior formation.\footnote{More precisely, we only rely on both actions being observed when the game starts.} \Yakov{This sentence was not sufficiently clear. I suggest the following rewriting. Also, since it it our assumption only on the Degroot model, shouldn't we postpone this assumption to the section on Degroot?

We henceforth assume that the process starts at time $t=3$ and that the history at this time is $(m,k)=(1,1)$.  This assumption that both actions are present where the first agent takes her action is needed in order to avoid extreme priors of $0$ or $1$ in the DeGroot prior formation; both of these extreme priors cause the entire process to be stabilized on a single action ($0$ or $1$).

\subsubsection*{DeGroot and quasi-Bayesian prior formation}
%\Yakov{Shall we move this subsection to the prior formation paragraph?}
As defined, our three-step approach captures a wide range of heuristics. In particular, our assumptions capture three commonly applied prior formation functions that have been extensively analyzed in the literature: Bayesian, DeGroot, and quasi-Bayesian updating (also called naive inference). We discuss each in turn to provide further insight into every step of our heuristic.\footnote{We include Bayesian prior formation in the discussion as a benchmark and to illustrate the properties of the other two prior formation heuristics.} 

So far we have only assumed that each agent knows the signal-generating functions $(F_{0},F_{1})$. To capture Bayesian and quasi-Bayesian updating, let us additionally assume that agents commonly know the information structure $(\mu,F_{0},F_{1},S)$. Under Bayesian updating, the decision rule of each agent assigns the action $1$ to every belief (weakly) above $\frac 1 2$ and the action $0$ to every belief (strictly) below $\frac 1 2$. A \emph{Bayesian prior formation} function then assigns a prior belief to each unordered set of actions via Bayes' rule, assuming that actions are chosen based on the Bayesian decision rule and that the state and signals are generated according to $(\mu,F_{0},F_{1})$.

A \emph{quasi-Bayesian prior} formation function treats each observed action as if it were a best response to only the private signal of the agent taking the action. In other words, each agent is treated as a Bayesian agent who observes no history of actions and bases her decision only on the prior $\mu$ and her signal $s$. The quasi-Bayesian decision rule coincides with the Bayesian one in assigning the action $1$ to every belief (weakly) above $\frac 1 2$ and the action $0$ to every belief (strictly) below $\frac 1 2$. We shall call this decision rule the \emph{Bayesian decision rule} going forward. Let $e$ denote the quasi-Bayesian prior formation function. Assuming a common prior $\mu=\frac 1 2$, for $t\geq 3$ we have 
\begin{align}\label{eq:qb}
e(k,m)=\frac{q^{k}r^{m}}{q^{k}r^{m}+(1-q)^{k}(1-r)^{m}},    
\end{align}
where $q=\p_{F,\frac{1}{2}}(\omega=1|p(s)\geq\frac{1}{2})$ and $r=\p_{F,\frac{1}{2}}(\omega=1|p(s)<\frac{1}{2})$. Thus $q$ represents the probability a Bayesian agent assigns to state $\omega=1$ conditional on observing $a=1$ for an agent who took her action in isolation based solely on her private signal. Similarly, $r$ represents the probability a Bayesian agent assigns to state $\omega=1$ conditional on observing the action $a=0$ taken by an isolated agent. Hence $e(k,m)$ coincides with the posterior of a Bayesian agent who observes $k$ actions $1$ and $m$ actions $0$ that were taken based on the private signal only.  

Finally, a \emph{DeGroot prior formation} function sets the prior equal to the proportion of $1$ actions in the observed unordered set. Let $d$ denote the DeGroot prior formation function. For an observed set $(m,k)$ we have $d(k,m)=\frac{k}{k+m}$. This approach is inspired by DeGroot \cite{degroot1974reaching}, where continuous actions are updated by taking the average of observed actions. For binary actions, the DeGroot prior formation function has been analyzed in the repeated interaction model by Chandrasekhar et al. \cite{chandrasekhar2020testing} and Arieli, Babichenko, and Mueller-Frank \cite{arieli2019naive}. The former paper combines the DeGroot prior formation function with the Bayesian decision rule, and the latter paper allows for mixed decision rules as in this paper.  

\section{Learning Dynamics under DeGroot Prior Formation}\label{section:main results}
We first focus on the case where the information structure generating the signals of each agent is commonly understood by all agents.\footnote{Following the convention in the literature we assume that signals are i.i.d. across agents conditional on the state.} Thus the use of boundedly rational heuristics is justified only by the complexity of inference rather than additionally by insufficient knowledge of agents for the application of Bayesian updating. 
We turn to our main question, which concerns the information aggregation properties of the class of heuristics we introduced above. In particular, we will analyze two different notions of learning. Let $\p_{F,\mu,b,\sigma}\in\Delta(\Omega\times A^\infty)$ be the probability measure over the product of the state space and the action process that is generated by $\mu$, $F$, and an updating heuristic $(b,\sigma)$.
\begin{definition}
For a given information structure $(\mu,F_{0},F_{1},S)$ we say that an updating heuristic $(b,\sigma)$ satisfies \emph{asymptotic learning} if $\p_{F,\mu,b,\sigma}(a_t=\omega)$ approaches one as $t$ goes to infinity.    
\end{definition}
The second notion of learning, \emph{adequate learning}, is stronger and defined as follows. 
\begin{definition}
An updating heuristic $(b,\sigma)$ satisfies \emph{adequate learning} if for every information structure $(\mu,F_{0},F_{1},S)$ where $F_{0}$ and $F_{1}$ are absolutely continuous with respect to each other, it holds that $\p_{F,\mu,b,\sigma}(a_t=\omega)$ approaches one as $t$ goes to infinity.    
\end{definition}

Asymptotic learning is the common information aggregation notion in the literature: actions are required to converge in probability to the true state, for a given information structure. By contrast, adequate learning requires actions to converge to the truth for \emph{every} informative information structure.\footnote{An information structure is called informative if  $F_{0}$ and $F_{1}$ are absolutely continuous with respect to each other.} 
Applying these two notions, we now study the information aggregation properties of the DeGroot and the quasi-Bayesian prior formation functions. We do so by imposing the respective prior formation function
%, and the posterior formation via Bayes rule, 
but allow for any decision rule. 

\subsection{Adequate Learning for Specified Information Structures}
We now present the first main result of the paper, which provides a characterization of the decision rules that achieve adequate learning under DeGroot prior formation. 

\begin{theorem}\label{th:main}
 Let $d$ denote the DeGroot prior formation function. The updating heuristic $(d,\sigma)$ satisfies adequate learning if and only if $\sigma(x)=x.$
\end{theorem}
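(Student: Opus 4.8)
The plan is to recast the dynamics as a stochastic‑approximation (generalized Pólya urn) process in the single state variable $x_t = k/(k+m)$, the fraction of $1$‑actions in the history seen by agent $t$, which is exactly the DeGroot prior $d(k,m)$. Writing $y(x,s)=\frac{x p(s)}{x p(s)+(1-x)(1-p(s))}$ for the Bayesian posterior from prior $x$ and signal $s$, the probability that an agent with prior $x$ plays $1$ in state $\omega$ is $g_\omega(x)=\mathbb{E}_{s\sim F_\omega}[\sigma(y(x,s))]$. A one‑line computation gives the conditional drift $\mathbb{E}[x_{t+1}-x_t\mid\mathcal F_t]=\frac{1}{t}\,(g_\omega(x_t)-x_t)$, so $x_t$ is a $1/t$‑step stochastic approximation of the ODE $\dot x=g_\omega(x)-x$. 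Since $g_\omega$ is continuous, $\Pr[a_t=\omega]=\mathbb{E}[g_\omega(x_t)]\to 1$ in state $\omega$ if and only if $x_t\to\omega$; thus adequate learning is equivalent to $x_t\to\omega$ almost surely in each state, for every informative structure.

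For the ``if'' direction, fix $\sigma(x)=x$, so $g_\omega(x)=\mathbb{E}_{F_\omega}[y(x,s)]$. The two basic facts are: (i) the Bayesian posterior is a martingale, giving $x\,g_1(x)+(1-x)\,g_0(x)=x$; and (ii) $g_1(x)-g_0(x)=\int y(x,s)(\ell(s)-1)\,dF_0=\mathrm{Cov}_{F_0}(y(x,\cdot),\ell)>0$ on $(0,1)$, where $\ell=p/(1-p)$ is the likelihood ratio, because $y(x,\cdot)$ and $\ell$ are both increasing in $\ell$ while $\mathbb{E}_{F_0}[\ell-1]=0$ (a Chebyshev/FKG correlation inequality), with strictness from informativeness. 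Combining (i) and (ii) yields $g_0(x)<x<g_1(x)$ for all $x\in(0,1)$: the interior drift points strictly toward the correct endpoint, the only fixed points are $0$ and $1$, and $0$ is linearly unstable in state $1$ (indeed $g_1'(0)=\mathbb{E}_{F_0}[\ell^2]>1$). Standard results on bounded stochastic approximations / urns with a continuous, sign‑definite drift then force $x_t$ to converge almost surely to the zero set $\{0,1\}$, and almost‑sure non‑convergence to the unstable endpoint (Pemantle; Hill--Lane--Sudderth) gives $x_t\to 1$ in state $1$ and $x_t\to 0$ in state $0$, i.e.\ adequate learning.

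For the ``only if'' direction I argue the contrapositive: if $\sigma$ is not the identity I exhibit an informative structure for which $x_t$ is trapped away from the correct endpoint in one of the two states, so that $\Pr[a_t=\omega]\to g_\omega(x^*)=x^*\notin\{0,1\}$. First, if $\sigma(1)<1$ (resp.\ $\sigma(0)>0$) then strongly informative signals push posteriors near $1$ (resp.\ $0$) and create a stable interior fixed point, so learning fails outright; hence assume $\sigma(0)=0,\ \sigma(1)=1$. If $\sigma(x_0)<x_0$ for some $x_0\in(\tfrac12,1)$ or $\sigma(x_0)>x_0$ for some $x_0\in(0,\tfrac12)$, then by monotonicity $\sigma-\mathrm{id}$ keeps that sign on a whole subinterval of $(\tfrac12,1)$ (resp.\ $(0,\tfrac12)$); feeding in nearly uninformative symmetric binary signals, for which the state‑$\omega$ drift equals $\sigma(x)-x$ up to $O((\theta-\tfrac12)^2)$, places a stable interior fixed point between the start $x_3=\tfrac12$ and the correct endpoint, blocking convergence.

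The remaining ``over‑reaction'' case, where $\sigma$ lies weakly above the diagonal on $(\tfrac12,1)$ and weakly below it on $(0,\tfrac12)$ (but is not the identity), is the crux, and here nearly uninformative signals no longer suffice because their leading‑order drift does not change sign in the obstructing direction. The plan is to build a tailored asymmetric binary structure: choosing the ``bad'' posterior $y_L(x^*)$ close to $0$ (so $\sigma(y_L)\approx\sigma(0)=0$) and tuning the state‑$0$ signal weight, one can solve $g_0(x^*)=\mathbb{E}_{F_0}[\sigma(y(x^*,s))]=x^*$ at a point $x^*\in(\tfrac12,1)$ where $\sigma(x^*)>x^*$, and arrange stability and reachability from $\tfrac12$, so that in state $0$ the process converges to $x^*\neq0$ and learning fails. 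I expect the two genuine obstacles to be (a) this over‑reaction construction---making a single off‑diagonal value of $\sigma$ produce a stable, reachable interior equilibrium of the drift---and (b) the stochastic‑approximation inputs, namely convergence to the zero set of the drift and, most delicately, almost‑sure non‑convergence to unstable equilibria when the drift is not globally sign‑definite, where the vanishing of the increment variance near the boundary must be controlled.
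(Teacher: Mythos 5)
Your sufficiency direction is essentially the paper's argument: the drift of the urn process under probability matching is $g_\omega(x)=E_{F,x}[p_x(s)\mid\omega]$, the martingale identity $x\,g_1(x)+(1-x)\,g_0(x)=x$ together with $g_1(x)>g_0(x)$ (the paper gets this from first-order stochastic dominance of the conditional posterior distributions rather than a covariance inequality, but it is the same fact) gives $g_0(x)<x<g_1(x)$ on $(0,1)$, and Hill--Lane--Sudderth then yields $x_t\to\omega$ almost surely. One small remark: when the drift is sign-definite on all of $(0,1)$ you do not need any nonconvergence-to-unstable-points result; the relevant HLS theorem (the paper's Lemma~\ref{proposition:aux}) gives convergence to the attracting endpoint directly.

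The necessity direction has a genuine gap, and it sits exactly where you flag it. You try to place a \emph{stable interior fixed point between the starting point $\tfrac12$ and the correct endpoint}, which forces you into a case analysis and leaves the ``over-reaction'' case ($\sigma\ge\mathrm{id}$ on $(\tfrac12,1)$, $\sigma\le\mathrm{id}$ on $(0,\tfrac12)$) as an unproved plan requiring a tailored asymmetric structure. The paper shows this extra structure is unnecessary: take \emph{any} $\tilde x$ with, say, $\sigma(\tilde x)>\tilde x$, use the same nearly uninformative symmetric binary signals so that the urn function \emph{conditional on state $0$} satisfies $f(x)>x$ on an interval around $\tilde x$, and then (i) couple $f$ from below with a continuous $g\le f$ having a stable fixed point $x'$ there, so that by Lemma~\ref{proposition:aux2} there is an interval $J\ni x'$ and a threshold $n$ such that any trajectory entering $J$ after time $n$ fails to converge to $0$ with positive probability; and (ii) solve reachability not by starting inside the trap but by a contradiction-plus-absolute-continuity argument: if learning held, then conditional on $\omega=1$ the process reaches $x_t\ge 1-\epsilon$ at arbitrarily late times, hence by mutual absolute continuity of $F_0,F_1$ over finite histories it does so with positive probability conditional on $\omega=0$ as well, and any path descending from near $1$ to $0$ must cross $J$ at a time exceeding $n$. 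This kills every non-identity $\sigma$ in one stroke, including your over-reaction case, and it also disposes of your concern (b): you never need almost-sure nonconvergence to unstable equilibria, only the positive-probability convergence to a stable interior point supplied by Lemma~\ref{proposition:aux2} plus the coupling. As written, your proposal does not prove the ``only if'' direction.
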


There are several features of this result that are worthwhile to discuss. First, a very simple updating heuristic
%, common to all agents
enables actions to converge to the true state. In contrast, Bayesian prior formation would be exceedingly complex. Each agent would be required to consider the unordered set each predecessor might have observed, her Bayesian inference based on the signal and the set of signals that in combination would lead to her action. In contrast, DeGroot virtual prior formation simply averages the actions in the unordered set. Second, the updating heuristic combining DeGroot averaging with probability matching achieves asymptotic learning for \emph{every} informative information structure. Instead, in the setting we consider, adequate learning would fail for Bayesian agents. Any information structure that features bounded signals\footnote{Signals are bounded if the support of the Bayesian posterior belief conditional on the signal contains neither 0 nor 1.} fails to achieve asymptotic learning. Third, probability matching is necessary and sufficient for adequate learning under DeGroot prior formation. Thus, Theorem \ref{th:main} gives a normative foundation for both Degroot averaging and probability matching in an observational learning environment. We discuss this in more detail in Section \ref{sec:conclusion}.

A puzzling feature of Theorem \ref{th:main} is that probability matching not only enables adequate learning but is in fact the \emph{only} strategy to do so. Clearly, adequate learning is a strong condition since it requires asymptotic learning to hold for \emph{every} information structure. A natural follow-up question would weaken this requirement. For example, given a subclass of information structures $\mathcal{F}$, one may ask what decision rules enable asymptotic learning for this subclass. Our proof sheds some light on this question, but we leave a full analysis for future research.

\subsubsection*{Proof Outline of Theorem \ref{th:main}}
The proof of Theorem \ref{th:main} relies on the analysis of generalized Polya urn models due to Hill, Lane, and Sudderth \cite{hill1980strong}. In these Polya urn models an initial composition of blue and red balls is given in an urn. In discrete time, at every stage $t$, one new ball is added to the urn without replacement. This additional ball is blue with probability $f(x_t)$ and red with probability $1-f(x_t)$, where $x_t$ is the proportion of blue balls in the urn at the beginning of state $t$ and $f:[0,1]\rightarrow[0,1]$ is a measurable \emph{urn function}. Note that conditional on state $\omega$, our action process is a generalized urn process, where the blue and red balls correspond to the $1$ and $0$ actions respectively, and where the proportion $x_t$ corresponds to the fraction of $1$ actions at the beginning of period $t$. 

%The first observation is that conditional on a state $\omega$ our learning process is a generalized urn process where the blue and red balls corresponds to actions $1$ and $0$ respectively. 
For sufficiency, consider the process of actions conditional on state $\omega=1$. Let probability matching be the decision rule. Sufficiency then follows from two auxiliary results. We first show that the urn function $f$ corresponding to probability matching, which further depends on the information structure, has the property that given a virtual prior of $x$, the probability of action $a=1$ is strictly larger than $x$ for all $x\in(0,1)$. A direct corollary to the results of Hill, Lane, and Sudderth \cite{hill1980strong} establishes that (conditional on state $\omega=1$) the proportion of $1$ actions, and thus the virtual prior $x_t$, almost surely converges to $1$ as $t$ goes to infinity. Thus, in particular, $\p_{F,\mu}(a_t=1|\omega=1)$ approaches one.

For necessity of probability matching for adequate learning, we consider a decision rule $\sigma$ such that $\sigma(x)>x$ for some value $x\in [0,1]$ (the case $\sigma(x)<x$ may be treated symmetrically). We establish the result by providing one information structure for which asymptotic learning fails. Let $\tilde{F}$ be a binary information structure, i.e., with two signals, which is sufficiently uninformative. Consider the  actions process conditional on state $\omega=0$. We show that the urn function $\tilde f$ corresponding to $\tilde \sigma$ and $\tilde F$ satisfies $\tilde{f}(x)>x$ for some $x$. By standard coupling arguments this implies that the long-run fraction of $0$ actions is bounded away from $1$ with positive probability. This in turn implies that $\p_{F,\mu}(a_t=0|\omega=0)$ does not converge to one and thus asymptotic learning fails.

\subsection{Learning Dynamics in the Unspecified Information Model}
We next relax the assumption of conditionally i.i.d. signals, and analyze adequate learning for conditionally independent but not identically distributed signals. One critical assumption common to Bayesian observational learning models is that all agents commonly understand how the signals of all agents are generated. Thus they share a common specified model of the information structure. A recently emerging literature considers the case where Bayesian agents share a common yet incorrect model of the information structure.\footnote{See the discussion in the section on related literature.} We go one step further and consider the case of an unspecified model of information. Agents know how their own signals are generated but have \emph{no} knowledge of the signal generating process of others, including that of no distribution over possible information structures. This prevents the use of both Bayesian and quasi-Bayesian prior formation.  
The  model is as follows. The state of the world $\omega$ is drawn according to prior $\mu\in(0,1)$, and each agent $t$ receives a private signal that is drawn by an information structure $F^t$ independently across agents. Note that since agents observe only the aggregate statistic of past actions, even knowing the joint information structure $\{F^t\}_{t\in\mathbb{N}}$ would not allow them to distinguish more informative from less informative actions, adding further complexity to the Bayesian-updating task. We impose one mild condition on the set of information structures ${F^t}_{t\in\mathbb{N}}$. We let $P^t\in\Delta([0,1])$ be the probability measure that is generated by $F_t$ conditional on the prior $\mu=\frac{1}{2}.$ Formally, recall that $p(s)=\p_{F,\frac{1}{2}}(\omega=1|s)$ is the conditional probability of $\omega=1$ given that the prior is $\mu=\frac 1 2$ and the realized signal is $s$.
For every Borel measurable subset $B\subseteq[0,1]$ we let
$$P_t(B)=\p_{F,\frac{1}{2}}(p(s)\in B).$$
Let $v_t$ be the variance of $P_t$. We say that the sequence $\{F_t\}_t$ is \emph{uniformly informative} if 
$\inf_t v_t>0.$

We provide the following extension of Theorem \ref{th:main}.
\begin{theorem}\label{th:gl}
Let $d$ denote the DeGroot prior formation function. If $\sigma(x)=x$ then $(d,\sigma)$ satisfies adequate learning within the class of uniformly informative information structures. 
\end{theorem}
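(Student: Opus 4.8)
The plan is to adapt the Pólya-urn argument from the sufficiency direction of Theorem~\ref{th:main} to the non-identically-distributed case, with the key difficulty being that the urn function now varies with $t$. Fix the state $\omega=1$ and view the action process as a generalized Pólya urn where blue balls are the $1$-actions. Since probability matching is in force, the probability that agent $t$ chooses $a=1$, conditional on the current fraction $x_t$ of $1$-actions (which equals the DeGroot prior $d$), is the expected posterior under the Bayesian signal-aggregation formula~\eqref{eq:bayesian} applied to signals drawn from $F^t$. Concretely, I would define
\begin{align}\label{eq:urnfn}
f_t(x)=\int_0^1 \frac{px}{px+(1-p)(1-x)}\,\dd P_t^{\,1}(p),
\end{align}
where $P_t^{\,1}$ is the law of $p(s)$ conditional on $\omega=1$. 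The first step is to show that each $f_t$ satisfies $f_t(x)>x$ for all $x\in(0,1)$, which follows exactly as in Theorem~\ref{th:main} because the posterior map $p\mapsto \tfrac{px}{px+(1-p)(1-x)}$ is strictly convex in $p$ and, conditional on $\omega=1$, the measure $P_t^{\,1}$ puts more weight on large $p$ than the uniform-prior measure $P_t$; Jensen plus the likelihood tilt gives strict positivity of the drift $f_t(x)-x$.

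The main obstacle is that the single-urn-function convergence theorem of Hill, Lane, and Sudderth~\cite{hill1980strong} applies to a \emph{time-homogeneous} urn function $f$, whereas here the urn function changes with $t$. The pointwise inequality $f_t(x)>x$ is therefore not by itself enough — as $t\to\infty$ one could have $f_t(x)-x\to 0$ fast enough that $x_t$ stalls before reaching $1$. This is exactly where the uniform-informativeness hypothesis $\inf_t v_t>0$ enters. The plan is to convert the uniform variance lower bound into a \emph{uniform} lower bound on the drift on compact subintervals: I would show that there is a continuous function $g:(0,1)\to(0,\infty)$, independent of $t$, with $f_t(x)-x\ge g(x)$ for every $t$ and every $x\in(0,1)$. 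Such a bound should follow because the gap $f_t(x)-x$ is controlled from below by a quantity proportional to the variance $v_t$ of $P_t$ (the second-order term in the convexity/Jensen estimate), so $\inf_t v_t>0$ yields a $t$-free positive lower envelope away from the endpoints.

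With a uniform drift lower bound in hand, the second step is a stochastic-approximation / supermartingale argument rather than a direct appeal to~\cite{hill1980strong}. Writing $N_t=k+m=t-1$ for the number of balls and $x_{t+1}-x_t=\tfrac{1}{N_t+1}(\1{a_t=1}-x_t)$, the conditional drift is $\tfrac{1}{N_t+1}(f_t(x_t)-x_t)\ge \tfrac{1}{N_t+1}g(x_t)\ge 0$, so $x_t$ is a bounded submartingale and converges almost surely to a limit $x_\infty$; the uniform lower envelope $g>0$ on $(0,1)$ then forces $x_\infty\in\{0,1\}$ almost surely, and a standard argument (using that the drift pushes away from $0$) rules out $x_\infty=0$, giving $x_\infty=1$ a.s. Hence the fraction of $1$-actions tends to $1$ conditional on $\omega=1$, so $\p(a_t=1\mid\omega=1)\to1$, and the symmetric computation conditional on $\omega=0$ gives $\p(a_t=0\mid\omega=0)\to1$; averaging over the two states yields adequate learning within the uniformly informative class. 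I expect the routine parts (the Jensen estimate and the submartingale convergence) to be quick, and the real work to lie in establishing the $t$-uniform drift bound from $\inf_t v_t>0$ and in the endpoint analysis needed to replace the homogeneous-urn theorem.
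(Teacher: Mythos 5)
Your overall strategy is sound but genuinely different from the paper's, and it has one real gap plus one incorrect justification. The paper handles the time-inhomogeneity by constructing a \emph{single} binary information structure $F$ whose belief distribution is a mean-preserving contraction of every $P_t$ (this is where $\inf_t v_t>0$ enters), showing via convexity of $y\mapsto \frac{2xy^2}{xy+(1-x)(1-y)}$ that the time-$t$ urn function dominates the one induced by $F$ pointwise, coupling the inhomogeneous process above the homogeneous one, and then invoking Theorem \ref{th:main} for $F$. You instead extract a $t$-uniform drift envelope $f_t(x)-x\ge g(x)>0$ and run a stochastic-approximation argument directly. The envelope itself is obtainable: since $f_t(x)-x=\mathrm{Var}(p_x(s))/x$ under the prior-$x$ measure, and the posterior map $p\mapsto\frac{px}{px+(1-p)(1-x)}$ is bi-Lipschitz on compact subintervals of $(0,1)$ uniformly in $t$, the bound $\inf_t v_t>0$ does give a positive continuous lower envelope away from the endpoints. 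Note, however, that your stated mechanism for $f_t(x)>x$ is wrong: the map $p\mapsto\frac{px}{px+(1-p)(1-x)}$ is convex in $p$ only for $x\le\frac12$ and concave for $x>\frac12$, so ``Jensen plus the likelihood tilt'' does not apply as written; the correct route is the variance identity (equivalently, the first-order stochastic dominance argument in the proof of Theorem \ref{th:main}).

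The genuine gap is the endpoint step. A $[0,1]$-valued submartingale started at $x_0=\frac12$ satisfies only $E[x_\infty]\ge\frac12$, which is consistent with $P(x_\infty=0)$ as large as $\frac12$; so ``$x_t$ is a bounded submartingale, hence converges, and the drift rules out $x_\infty=0$'' is not a standard argument but precisely the non-attainability question that Hill, Lane, and Sudderth \cite{hill1980strong} resolve for \emph{time-homogeneous} urns with continuous urn functions, and their theorem does not apply verbatim to your time-varying $f_t$. The natural repair is to use your envelope to define the homogeneous urn function $\underline f(x)=x+g(x)$ (continuous, with $\underline f(x)>x$ on $(0,1)$), couple the true process above the $\underline f$-urn, and conclude $x_t\to1$ from Lemma \ref{proposition:aux} --- but that is exactly the coupling device the paper uses, so the ``direct'' submartingale route does not actually avoid it. If you intend to prove non-attainability of $0$ for the inhomogeneous process from scratch, that argument needs to be supplied; it is the heart of the matter, not a routine afterthought.
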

Theorem \ref{th:gl} shows that the DeGroot probability matching heuristic achieves perfect information aggregation even in settings where agents have no knowledge of the order in which past actions were chosen, no knowledge of the information structure of others, and where information structures across agents indeed differ. It is worthwhile to discuss this result's distinguishing features from existing results.
Molavi, Tahbaz-Salehi, and Jadbabaie \cite{molavi2018theory} and Jadbabaie, Molavi, Sandroni, and Tahbaz-Salehi \cite{jadbabaie2012non} analyze a setting where agents receive private signals in each period. They show that a version of DeGroot prior formation paired with a Bayesian decision rule can lead to perfect information aggregation when signals are not identically distributed across agents. However, they do so in a setting where agents communicate their belief, and thus allow for much finer inference on the underlying signal of the agent than in our setting of binary actions. 
Recent work by Frick, Iijima, and Iishi \cite{frick2020misinterpreting} has highlighted the fragility of the asymptotic learning outcomes under Bayesian learning to minor errors in the agents' common understanding of the environment.\footnote{See also Acemoglu, Ozdaglar, and ParandehGheibi \cite{acemoglu2010spread} and Mueller-Frank \cite{mueller2018manipulating} for the fragility of DeGroot updating in the rich action setting.} By contrast, Theorem \ref{th:gl} provides an asymptotic learning result for an updating heuristic that is robust within an open set of (conditionally independent) information structures. 

We make two remarks on the uniform informativeness condition. First, one can show that without it Theorem \ref{th:gl} can fail. Otherwise one could construct a sequence of information structures where the informativeness of the signals is vanishing along the sequence and asymptotic learning fails. The failure can occur even if the condition $\lim\sup_t v_t>0$ is satisfied. Second, here we chose to express informativeness in terms of the variance of private beliefs. Other notions of informativeness will lead to the same result.
%More precisely, one can construct a sequence of information structures where, but for a negligible fraction of them, the informativeness of the signals is vanishing along the sequence and asymptotic learning fails. Thus the condition $\lim\sup_t v_t>0$ is not sufficient for asymptotic learning.
\subsubsection*{Proof Outline of Theorem \ref{th:gl}}
The proof is based on the following line of argument. Since the joint information structure is uniformly informative, we can find an (informative) information structure $F$ that is dominated (in the Blackwell-ordering sense) by all information structures $F_t$.
We then show that, conditional on state $\omega=1$ and a proportion $x_t$, the probability of agent $t$ choosing action $a_t=1$ under the information structure $F_t$ is larger than the probability of her choosing the same action under the information structure $F$. Theorem \ref{th:gl} then follows from Theorem \ref{th:main}.

\section{Learning under Quasi-Bayesian Prior Formation}
We next turn to quasi-Bayesian prior formation. Eyster and Rabin \cite{eyster2010naive} consider the sequential social learning model and show that Quasi-Bayesian prior  and Bayesian posterior formation paired with a Bayesian decision rule leads to failure of asymptotic learning (in a setting with bounded signals). 
%This true even in extremely rich-information settings where asymptotic learning holds for rational agents. 
As we show next, failure of asymptotic learning extends to every decision rule, in every information structure. Recall that under quasi-Bayesian updating, every observed action is treated as if it were based only on the private signal. To assure that every observed unordered set of actions is consistent with its interpretation, we assume that the prior belief is equal to half\footnote{Under the uniform prior a Bayesian agent, who takes his action based on his signal alone, selects either action with positive probability, for any pair of state-dependent signal-generating measures $(F_{0},F_1)$.}, $\mu=\frac{1} {2}$. Here we further assume that $F_0$ and $F_1$ are mutually absolutely continuous with respect to each other. In making this assumption we avoid cases where some signals fully reveal the realized state. 

Recall that the quasi-Bayesian prior formation function $e(k,m)$ is given by 
\begin{align}\label{eq:qb}
e(k,m)=\frac{q^{k}r^{m}}{q^{k}r^{m}+(1-q)^{k}(1-r)^{m}}    
\end{align}
where $q=\p_{F,\frac{1}{2}}(\omega=1|p(s)\geq\frac{1}{2})$ and $r=\p_{F,\frac{1}{2}}(\omega=1|p(s)<\frac{1}{2})$.

%Let $q=\p_{F,\frac{1}{2}}(\omega=1|p(s)\geq\frac{1}{2})$ and $r=\p_{F,\frac{1}{2}}(\omega=1|p(s)<\frac{1}{2})$. Thus $q$ represents the probability of state $\omega=1$ conditional on observing $a=1$ for an agent who took his action in isolation based solely on his private signal. Similarly $r$ represents the  probability of state $\omega=1$ conditional on observing action $a=0$ for an agent who took his action in isolation. Finally 

%This formula represents a Bayesian updating of an agent who believes the actions of all of his predecessors where taken in isolation. 
The following result provides a sharply negative result regarding the information aggregation properties of Quasi-Bayesian prior formation.
%with respect to Eyster and Rabin's prior formation function. 
\begin{theorem}\label{th:eyster}
 Let $e$ denote the Quasi-Bayesian virtual prior formation function. The updating heuristic $(e,\sigma)$ fails to achieve asymptotic learning for every decision rule $\sigma$, in every informative information structure $(\mu=\frac{1}{2},F,S)$.
\end{theorem}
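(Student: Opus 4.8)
The plan is to work conditionally on each state $\omega$, where—exactly as in the proof of Theorem \ref{th:main}—the action process is a generalized P\'olya urn: writing $x_t$ for the fraction of $1$-actions among the first $t-1$ agents, the probability that agent $t$ plays $a_t=1$ given $x_t=x$ is the urn function
\[ g_{\omega,t}(x)=\int_S \sigma\!\left(\frac{p(s)\,e(k,m)}{p(s)\,e(k,m)+(1-p(s))(1-e(k,m))}\right)dF_\omega(s),\qquad k=x(t-1),\ m=(1-x)(t-1). \]
First I would compute the limit of the quasi-Bayesian prior along a fixed fraction $x$. Writing $e(k,m)=\bigl(1+(\tfrac{1-q}{q})^{k}(\tfrac{1-r}{r})^{m}\bigr)^{-1}$ and using $q>\tfrac12>r$ (which holds for every informative, mutually absolutely continuous structure), the second term is governed by the sign of $x\log\frac{q}{1-q}-(1-x)\log\frac{1-r}{r}$, so that $e(k,m)\gor 1$ when $x>x^\ast$ and $e(k,m)\gor 0$ when $x<x^\ast$, where
\[ x^\ast=\frac{\log\frac{1-r}{r}}{\log\frac{q}{1-q}+\log\frac{1-r}{r}}\in(0,1). \]

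The decisive observation is that the resulting limiting urn function is \emph{independent of the state}. Indeed, for $x<x^\ast$ the prior collapses to $0$, so the posterior in the integrand tends to $0$ for a.e.\ signal (here mutual absolute continuity guarantees $p(s)\in(0,1)$ a.e.), and by continuity of $\sigma$ at $0$ together with bounded convergence $g_{\omega,t}(x)\gor\sigma(0)$ regardless of $F_\omega$; symmetrically $g_{\omega,t}(x)\gor\sigma(1)$ for $x>x^\ast$. Thus both conditional urns share the same limiting urn function $h$, the step function equal to $\sigma(0)$ below $x^\ast$ and $\sigma(1)$ above it. Invoking the results of Hill, Lane, and Sudderth \cite{hill1980strong} as in Theorem \ref{th:main}, conditional on each state $x_t$ converges a.s.\ to a fixed point $X_\infty$ of $h$, and every stable fixed point lies in $\{\sigma(0),\sigma(1)\}$.

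The contradiction with learning then follows from a short case analysis. If $\sigma(1)<1$ then $h\le\sigma(1)<1$, forcing $X_\infty\le\sigma(1)<1$ a.s., whence $\p_{F,\mu}(a_t=1\mid\omega=1)\gor\mathbb{E}[X_\infty]<1$ and learning fails in state $1$; symmetrically $\sigma(0)>0$ breaks learning in state $0$. It remains to treat $\sigma(0)=0,\ \sigma(1)=1$, in which case $h$ has exactly two stable fixed points, $0$ and $1$, separated by the repelling tipping point $x^\ast$. Here I would show that state $0$ herds on the \emph{wrong} action with positive probability: since $F_0$ and $F_1$ are mutually absolutely continuous, a run in which the first $L$ agents all choose $a=1$ has positive probability conditional on $\omega=0$—each step has positive probability, and since additional $1$-actions only raise the prior and hence the chance of a further $1$, the run probability is bounded below by a positive constant to the power $L$—and for $L$ large this pushes $x$ above $x^\ast$; from there the upward drift $h(x)=1>x$ on $(x^\ast,1)$ traps the urn at $1$ with positive probability. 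Hence $\p_{F,\mu}(X_\infty=1\mid\omega=0)>0$, contradicting the $X_\infty=0$ a.s.\ required for learning in state $0$. (If instead that run has probability zero, then $\sigma$ and the signals force every agent to play $0$, so state $1$ never learns; the symmetric all-$1$ herd is handled likewise.)

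The main obstacle is that the urn function $g_{\omega,t}$ is genuinely time-dependent, whereas \cite{hill1980strong} is stated for a fixed urn function. The technical heart is therefore to upgrade the pointwise limits $g_{\omega,t}\gor h$ to uniform control on compact subsets of $(0,x^\ast)$ and $(x^\ast,1)$, so that both the convergence-to-fixed-points and the positive-probability trapping near $0$ and $1$ survive the time dependence, while separately checking that the repelling point $x^\ast$ is reached with probability zero. This is exactly the delicate step, and it mirrors the coupling arguments used for the necessity direction of Theorem \ref{th:main}.
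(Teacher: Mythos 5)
Your case analysis has the right skeleton, and the easy half is correct: if $\sigma(1)<1$ then $\p(a_t=1\mid\omega=1)\le\sigma(1)<1$ for \emph{every} $t$ (no urn limit is even needed), and symmetrically for $\sigma(0)>0$; the paper reaches the same dichotomy in reverse, deriving $\sigma(1)=1$ and $\sigma(0)=0$ as necessary consequences of the contradiction hypothesis. Your threshold $x^\ast$ and the limiting step function are also computed correctly.

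The genuine gap is the step you yourself flag as ``the delicate step,'' and it cannot be closed by citing Hill--Lane--Sudderth. The quasi-Bayesian process is not a generalized urn process: its transition law depends on $(k,m)$ through the log-odds $k\log\frac{q}{1-q}-m\log\frac{1-r}{r}$, not through the proportion $x$, so \cite{hill1980strong} gives you neither the a.s.\ convergence of $x_t$ to a fixed point of $h$ that you assert, nor the positive-probability trapping above $x^\ast$ in the critical case $\sigma(0)=0,\ \sigma(1)=1$ --- and that trapping is the entire content of the theorem once the easy cases are dispatched. ``Uniform convergence of $g_{\omega,t}$ to $h$ on compacts'' is a true statement but does not by itself control the process: what is needed is a quantitative argument that, once $e(k_t,m_t)$ exceeds some $b$, it stays above $b$ forever with positive probability conditional on $\omega=0$. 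The paper does this by working directly in the $e$-variable: it groups steps into blocks of $l=\lceil\log\tfrac{1-r}{r}/\log\tfrac{q}{1-q}\rceil$ consecutive $1$-actions (one completed block raises the log-odds of $e$ by at least what one $0$-action destroys), chooses $b$ so that each block succeeds with probability at least $\tfrac34$ whenever $e>b$, and applies the gambler's-ruin formula to the induced biased walk to get survival probability at least $\tfrac23$. Your proportion-space route could in principle be completed --- after the length-$L$ run you would need an explicit coupling with an i.i.d.\ Bernoulli$(1-\epsilon)$ sequence plus a maximal/large-deviation bound showing the running proportion never recrosses $x^\ast+\delta$ when started near $1$ at a large time --- but none of that is in the proposal; as written, the decisive step is announced rather than proved, and the tool you invoke for it does not apply.
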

This result raises the question as to what causes the distinction between the sharp negative result of Theorem \ref{th:eyster} and the positive result of Theorem \ref{th:main}. The answer, we believe, relies on the fact that under the quasi-Bayesian heuristic the prior of agent $t$ is determined by the difference between the number of occurrences of actions $1$ and $0$. Thus the prior can be extreme and close to zero (or one) even when the proportion of actions equal to $1$ (or $0$) is close to $\frac{1}{2}$. This entails that, compared with the DeGroot prior, the quasi-Bayesian prior is more sensitive and goes to the extremes much faster. In particular, incorrect herds are harder to overturn under quasi-Bayesian prior formation.

%the average the posterior distribution is much more ``sensitve'' and goes to the extreme much faster. In particular bad herd are harder to overturn in the quasi Bayesian heuristic.    

\subsubsection*{Proof Outline of Theorem \ref{th:eyster}}
We next provide a proof outline for Theorem \ref{th:eyster}. Fix an information structure $F$ and, by way of contradiction, assume that asymptotic learning holds for some decision rule $\sigma$. We first show that conditional on state $0$, the probability of selecting action $1$ goes to one as the observed fraction of $1$ actions goes to one. 
We use this fact to couple the quasi-Bayesian action process, conditional on state $\omega=0$ with a random walk over the integers. The idea is that the coupled random walk moves to the right by one integer if $a_t=1$ and otherwise moves to the left. 

%We show that this must imply that $\lim_{x\rightarrow 1}\sigma(x)=1$ and $\lim_{x\rightarrow 0}\sigma(x)=0.$ 
%The proof relies on identifying the learning process conditional on state $\omega=0$ with a random walk over the integers. The random walk starts at time $t_0$ and state $z_0=1$. It goes one step to the right  at stage $t>t_0$ if $a_t=1$ and one step to the left if $a_t=0$.
We then show, using the classic \emph{gambler's ruin probability formula}, that if the starting time $t_0$ of the random walk  corresponds to a history where the advantage of actions $1$ over $0$ is sufficiently large, then the random walk converges to infinity without ever crossing state $0$ with positive probability. Using our identification, this implies that $a_t$ converges to $1$ with positive probability conditional on state $\omega=0$, in contradiction to the fact that asymptotic learning holds.

\section{Extensions}\label{sec:extensions}
\subsubsection*{General Finite State Space}
Following the norm in the social learning literature, we have focused up to now on the binary model. A natural question to ask is whether Theorem \ref{th:main} carries forward to the case of a general finite state space $\Omega=\{\omega_1,\ldots,\omega_n\}$. We address this now.

Let the information structure be given by $(\mu,(F_\omega)_{\omega\in\Omega},S)$, where $\mu\in\Delta(\Omega)$ is the prior probability measure, $S$ is a measurable space, for every $\omega\in\Omega$ we have $F_\omega\in\Delta(S)$, and $F_\omega\neq F_{\omega'}$ for $\omega\neq\omega'$.
The action set is $A=\{a_1,\ldots,a_n\}$, where for each $1\leq k\leq n$ action $a_k$ yields a utility of $1$ in state $\omega_k$, and a utility of $0$ in any other state. As before, a decision rule is a mapping $\sigma:\Delta(\Omega)\rightarrow\Delta(A).$ We can replace the right- or left-continuity requirement with the requirement that for every $p\in \Delta(\Omega)$  it holds that $\lim_{q\rightarrow p}\sigma_k(q)=\lim_{q\rightarrow p}\sigma_k(q)$ for vectors $q$ such that $q_k<p_k$ (or vice versa).  

Agent $t$ now forms a prior probability $p^t\in\Delta(\Omega)$ according to the proportion choice of actions in the unordered set. She then incorporates her private signal in accordance with Bayes' rule. We claim that Theorem \ref{th:main} extends to this setting. To see why, let the realized state be $\omega_k\in\Omega$ and consider the following reduction to the binary case: at time $t$ we add a ``blue ball'' to the urn only if agent $t$ plays action $a_t$, and otherwise we add a ``red ball''. By Theorem \ref{th:main}, this reduction demonstrates that when the decision rule $\sigma$ equals probability matching, then the proportion of $a_k$ actions approaches one as time goes to infinity. The converse direction also follows immediately from Theorem \ref{th:main}.

\subsubsection*{General Updating Heuristics}

%\Yakov{The purpose of the current section is vague. I suggest to rewrite it roughly as follows.

Without imposing any restrictions on the prior formation function and on the posterior formation function one may ask: which triplets $(b,a,\sigma)$ satisfy adequate learning? Here $b:\mathbb{N}^2 \to [0,1]$ is a general prior formation function and $a:[0,1]\times S \to [0,1]$ is a general function that generates a posterior belief based on the prior and the signal $s\in S$. Unfortunately, we are not yet able to fully answer this question and we leave it open for future research. 
Nevertheless, a few observations about this more general question are worth mentioning. 
%Theorem \ref{th:main} argues that for $b(k,m)=\frac{m}{m+k}$ (i.e., the De-Groot prior formation), $a(b,s)=\frac{p(s)b}{p(s)b+(1-p(s))(1-b)}$ (i.e., the Bayesian signal aggregation, see Equation \eqref{eq:bayesian}) there is a unique decision heuristic $\sigma(a)=a$ such that $(a,b,\sigma)$ satisfies adequate learning. One may ask whether this specific triple $(a,b,\sigma)$ is the unique triple that satisfies adequate learning. The answer to this question is negative. Below we present another triple that satisfies adequate learning: ... The example ...

%One may also ask whether the fact that signal aggregation is done in a Bayesian manner is necessary for adequate learning. Again the answer to this question is negative. ... Another example with De-Groot and probability matching but other than Bayesian aggregation...

Consider the case where the
prior formation function $b:\mathbb{N}\times\mathbb{N}\rightarrow[0,1]$ is \emph{size invariant}. That is, there exists an increasing continuous function $g:[0,1]\rightarrow[0,1]$ such that for every $(k,m)$, 
$$b(k,m)=g(\frac{k}{k+m})=g(d(k,m)).$$
%\end{definition}
Under size invariant prior formation the prior is determined by a general function of the proportion of $1$ actions, not necessarily the identity function. 

One particular case of the question above, would assume size invariant prior formation and Bayesian posterior formation. The objective then would be to characterize the class $(g,\sigma)$ of size invariant prior formation functions and decision rules for whom adequate learning holds. %The following is a straightforward corollary from the proof of Theorem \ref{th:eyster}.
%\begin{corollary}
%If $b$ is proportional invariant and $(b,\sigma)$ satisfy adequate learning, then for every $x\in[0,1]$ it holds that $\sigma(g(x))=x$.
%\end{corollary}
%Thus a necessary condition for adequate learning is that $g$ and $\sigma$ are inverse functions. This condition, however, is not sufficient. 
%\begin{definition}
Let $x$ denote the proportion of $1$ actions and $y$ the posterior belief. Consider the case where for every $x$ and $y$ we have
$$g(x)=\frac{xc}{cx+(1-c)(1-x)}\text{ and } \sigma(y)=\frac{(1-c)y}{(1-c)y+c(1-y)}.$$
%\end{definition}
%Thus, in particular, in this case
%$g(x)=\frac{xc}{cx+(1-c)(1-x)}$ and indeed %$\sigma(g(x))=x.$
We call such a pair of functions a \emph{complementary pair}.
In words, the function $g$ maps a proportion $x$ to a prior $g(x)$ by treating $x$ as a Bayesian prior and updating it via Bayes rule with an imaginary signal $s$ such that $p(s)=c.$ The strategy $\sigma$ on the other hand takes the posterior $y$ and ``corrects'' the bias imposed on the proportion $x$ by updating $y$ via Bayes' rule with the imaginary countersignal $s'$ such that $p(s')=1-c$.
%which is obtained from $x$ according to Bayesian updating of a signal $s$ with $p(s)=c.$ The strategy $\sigma$ on the other hand takes the posterior and `corrects' the bias imposed on the proportion $x$ by 
%deviation by the reverse signal with precision $p(s)=1-c$. 
The following is a simple corollary of the proof of Theorem \ref{th:main}. 
\begin{corollary}
If $(g,\sigma)$ is a \emph{complementary pair}, then adequate learning holds.
\end{corollary}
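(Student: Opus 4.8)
The plan is to show that a complementary pair $(g,\sigma)$, when combined with size-invariant prior formation and Bayesian posterior formation, induces exactly the same action dynamics—that is, the same urn function—as the pair consisting of DeGroot prior formation and probability matching analyzed in Theorem \ref{th:main}. Once this identification is in place, adequate learning follows immediately from Theorem \ref{th:main}, which is why the statement is a corollary of its proof rather than an independent result.

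First I would track the agent's belief through the three updating steps in odds (likelihood-ratio) form, where each Bayesian update becomes a multiplication. Fix the proportion of $1$-actions $x\in(0,1)$ and a realized signal $s$. The prior formation step sends $x$ to $g(x)$, which by construction multiplies the odds $\tfrac{x}{1-x}$ by the factor $\tfrac{c}{1-c}$ associated with the imaginary signal of strength $c$. The Bayesian posterior step with the real signal then multiplies by $\tfrac{p(s)}{1-p(s)}$, producing the posterior $y$. Finally the decision rule $\sigma$ multiplies the odds of $y$ by $\tfrac{1-c}{c}$, the factor of the imaginary countersignal of strength $1-c$. Multiplying the three factors, the $c$-terms cancel and the odds that agent $t$ plays $a=1$ equal $\tfrac{x}{1-x}\cdot\tfrac{p(s)}{1-p(s)}$.

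The key observation is that this is precisely the odds of the posterior that a DeGroot agent with prior $x$ and probability-matching decision rule assigns to action $1$ after observing the same signal $s$. Hence, conditional on either state, the probability that agent $t$ plays $a=1$ given proportion $x$—the urn function of the complementary-pair heuristic—coincides identically with the urn function of the DeGroot/probability-matching heuristic. Since $\sigma$ is a strictly increasing, continuous Bayesian update and therefore an admissible decision rule, the entire action process is the same generalized Polya urn process studied in the proof of Theorem \ref{th:main}, so the conclusion transfers verbatim.

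I expect no genuine obstacle here: the content is entirely the odds-space cancellation of the imaginary signal against the countersignal, which is a short computation rather than a new argument. The only points requiring mild care are checking that $g$ and $\sigma$ are well defined and monotone for every fixed $c\in(0,1)$, so that $(g,\sigma)$ is an admissible heuristic, and confirming that the cancellation holds uniformly in $s$, so that the two urn functions agree as functions of $x$ rather than merely in some aggregate sense. With the urn functions thus identified, adequate learning follows directly from Theorem \ref{th:main}.
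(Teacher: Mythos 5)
Your proposal is correct and follows essentially the same route as the paper: the paper asserts that a complementary pair is observationally equivalent to $(id,id)$ and deduces the corollary from Theorem \ref{th:main}, and your odds-space cancellation is precisely the computation that establishes that equivalence (signal-by-signal, hence for the urn functions). You in fact supply more detail than the paper, which leaves the equivalence as an assertion.
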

To see this, note that such a complementary pair $(g,\sigma)$ is \emph{observationally equivalent} to the case where $g=id$ and $\sigma=id$ (as considered in Theorem \ref{th:main}) in the following sense: for every proportion $x$, state $\omega$, and information structure $F$, the probability that an agent who uses $(g,\sigma)$ will choose action $1$ is the same as $(id,id)$. Thus, conditional on any state $\omega$ the action process that is induced by 
$(g,\sigma)$ and by $(id,id)$ have exactly the same distribution. The corollary readily follows from this fact. 
One can further show that only complementary pairs are observationally equivalent\footnote{This claim is by no means straightforward.} to $(id,id)$. Based on the corollary, one may conjecture that observational equivalence with DeGroot prior formation paired with probability matching is a necessary condition for adequate learning within the class of size-invariant prior formation functions.

\section{Concluding Remarks}\label{sec:conclusion}
Our results provide a normative foundation for the DeGroot probability-matching heuristic in a observational learning environment with binary actions. We now briefly discuss several aspects of our contribution and mention some open questions for future work. First, for models of repeated interaction on social networks DeGroot updating has been shown to achieve learning in a setting where the action space is rich; see Golub and Jackson \cite{golub2010naive}. We show that in the sequential model where observations are limited to an aggregate statistic, DeGroot prior formation can achieve asymptotic learning even for binary actions in \emph{every} informative information structure, but only if the decision rule satisfies probability matching. For some information structures where probability matching achieves asymptotic learning, pairing the DeGroot prior formation with the (deterministic) Bayesian decision rule induces long-run actions that match the state only with probability half. Thus probability matching can dramatically improve long-run outcomes relative to the Bayesian decision rule. Despite probability matching violating expected utility maximization, there is a vast experimental literature documenting it in laboratory settings. The reasons behind the widespread use of probability matching have long puzzled researchers. In the words of Kenneth
Arrow \cite{arrow1958utilities}:

``\textit{We have here an experimental situation which is essentially of an
economic nature in the sense of seeking to achieve a maximum of expected
reward, and yet the individual does not in fact, at any point, even in a
limit, reach the optimal behavior. I suggest that this result points out
strongly the importance of learning theory, not only in the greater
understanding of the dynamics of economic behavior, but even in suggesting
that equilibria may be different from those that we have predicted in our
usual theory.}''

Our results point to one possible evolutionary justification for probability matching: it can achieve optimal asymptotic outcomes in observational learning environments where expected utility maximization fails. More precisely, probability matching induces a positive learning externality. The existence of what is seen as a violation of rationality might instead point to an intrinsic utility that weights outcomes of others. Our characterization leaves no room for any mixed learning heuristic other than \emph{exact} probability matching. However, we believe that for a fixed information structure, under the appropriate notion of proximity, any decision rule that is close to probability matching also achieves asymptotic learning.

%\Yakov{Since our result is "only exact probability matching works; Any mild deviation from it causes a failure." I am not sure we want to put so much emphasis on the motivation of "we give an explenation to the observed-in-lubs behavior of probability matching. There probably is no lub-experiment that tries to argue that \emph{exact} probability matching is what humans are using. I believe that a suitable interpretation of the lub experiments is  \emph{ a tendency toward mixed strategies that put positive weight also on the inferior action.}
%Unlike our previous paper where we show that "everything with the correct tendency works", here we are in a different situation with the given results. So I would skip this discussion, and downplay the interpretations in the introduction too.} 

Second, we derive the positive results in a model where agents observe only the unordered set of past actions. Note that the exact same results would hold if instead they were to observe the ordered set of past actions as is the norm in the sequential social learning literature.

Finally, our analysis leaves room for further work. We established probability matching as necessary and sufficient for adequate learning if prior formation is according to DeGroot. A more universal approach would instead allow for general prior formation and aim at characterizing the properties of the pair of the prior formation function and decision rule that achieve adequate learning. The size-invariant prior formation function introduced above provides one avenue of approach to this problem. We leave this for future work.

\section{Proofs}
We start with the proof of Theorem
\ref{th:main}. The proof of the theorem relies on \cite{hill1980strong} which is a classic paper on \emph{generalized urn processes}. A generalized urn process $\{x_t\}_{t\in\mathbb{N}}$ comprises an initial urn composition of $m>0$ blue balls and $k>0$ red balls and a measurable function $f:[0,1]\rightarrow[0,1]$. At any point in time $t\geq 1$, if the blue balls' proportion is $x_t$, then a new blue ball is added with probability $f(x_t)$ and a new red ball is added with probability $1-f(x_t)$. We first note that by Theorem 2.1 in \cite{hill1980strong} the process $\{x_t\}_{t\geq 0}$ converges almost surely to a limit.
The following two auxiliary lemmas follow directly from the main results in \cite{hill1980strong}.
\begin{lemma}\label{proposition:aux}
Consider an urn process and assume that $f$ is a continuous function.
If $f(x)>x$ ($f(x)<x$) for every $x\in(0,1)$, then $\lim_{t\rightarrow\infty} x_t=1$ ($\lim_{t\rightarrow\infty} x_t=0$). 
\end{lemma}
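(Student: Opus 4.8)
The plan is to derive the claim from the three principal conclusions of the urn-process analysis in \cite{hill1980strong}. It suffices to treat the case $f(x)>x$ for all $x\in(0,1)$, since the case $f(x)<x$ follows by interchanging the two colors: if $y_t=1-x_t$ denotes the red proportion, then a red ball is added with probability $1-f(x_t)=\tilde f(y_t)$ where $\tilde f(y)=1-f(1-y)$, and $f(x)<x$ on $(0,1)$ is equivalent to $\tilde f(y)>y$ on $(0,1)$; applying the first case to $\tilde f$ gives $y_t\to 1$, i.e.\ $x_t\to 0$.

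First I would recall the convergence fact already invoked in the paper: by Theorem 2.1 of \cite{hill1980strong}, the proportion process $\{x_t\}$ converges almost surely to a random limit $x_\infty\in[0,1]$, so the task reduces to identifying $x_\infty$. Second, I would pin down where $x_\infty$ can sit. Since $f$ is continuous and $x<f(x)\le 1$ for every $x\in(0,1)$, letting $x\to 1^-$ forces $f(1)=1$, while no interior point is fixed; hence the fixed-point set $\{x\in[0,1]:f(x)=x\}$ is contained in $\{0,1\}$. The strong law of \cite{hill1980strong} guarantees that, for continuous $f$, the limit must be a fixed point of $f$: if $f(x_\infty)\neq x_\infty$, then continuity makes the drift $f-\mathrm{id}$ bounded away from $0$ near $x_\infty$, which is incompatible with convergence to that value. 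Therefore $x_\infty\in\{0,1\}$ almost surely.

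Third, the crux is to discard the value $0$. Either $f(0)>0$, so $0$ is not a fixed point and the previous step already excludes it; or $f(0)=0$, in which case $0$ is a repelling fixed point, since $f(x)>x$ on a right neighborhood $(0,\delta)$. The non-convergence-to-unstable-equilibria (upcrossing) part of \cite{hill1980strong} then yields $\Pr(x_\infty=0)=0$. Intuitively, because the urn begins with $m,k>0$ balls, $x_t\in(0,1)$ at every finite time, and the positive drift $f(x)-x>0$ near $0$ repeatedly pushes the process away, so it cannot accumulate there. Combining the three steps gives $x_\infty=1$ almost surely.

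The hard part, I expect, will be the boundary bookkeeping: the crossing classification of \cite{hill1980strong} into attracting ``downcrossings'' and repelling ``upcrossings'' is stated most cleanly for interior fixed points, so I must verify that the endpoint $1$ qualifies as the attracting case and the endpoint $0$ (when it is fixed) as the repelling case, and confirm that the hypotheses of their non-convergence theorem—continuity of $f$ together with the one-sided strict inequality $f(x)>x$ near the relevant point—are indeed all that is needed here.
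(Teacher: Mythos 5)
Your derivation is correct and takes essentially the same route as the paper, which offers no proof of this lemma beyond the assertion that it ``follows directly from the main results in \cite{hill1980strong}''; your three steps (almost-sure convergence, support of the limit in the fixed-point set of a continuous urn function, and non-convergence to the repelling fixed point) are precisely the Hill--Lane--Sudderth results being invoked. The boundary issue you flag at the end is real but benign: their non-convergence result applies to any point, including the endpoint $0$, at which $f-\mathrm{id}$ is strictly positive on a one-sided neighborhood, so no extra argument is needed.
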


\begin{lemma}\label{proposition:aux2}
Consider an urn processes and assume that $f$ is a continuous function.
If there exists $x'\in (0,1)$ such that $f(x)>x$ in a left neighborhood of $x'$ and $f(x)<x$  in a right neighborhood of $x'$, then there exist an open interval $I$ where $x'\in I$ and a constant $n$ such that for $x_0\in I$ and $m+k\geq n$ it holds that $\lim_{t\rightarrow\infty}x_t=x'$ with positive probability.      
\end{lemma}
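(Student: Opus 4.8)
The plan is to derive the statement from the almost-sure convergence already recorded in Theorem 2.1 of \cite{hill1980strong}, combined with a \emph{trapping} argument near $x'$. Write $T_t=m+k+t$ for the number of balls when the proportion is $x_t=B_t/T_t$, with $B_t$ the number of blue balls. The single computation I rely on repeatedly is the conditional drift
\[
\mathbb{E}[x_{t+1}-x_t\mid\mathcal{F}_t]=\frac{f(x_t)-x_t}{T_t+1},
\]
together with the deterministic bound $|x_{t+1}-x_t|\le (T_t+1)^{-1}$ and the variance bound $\mathbb{E}[(x_{t+1}-x_t)^2\mid\mathcal{F}_t]\le (T_t+1)^{-2}$. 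Two facts from \cite{hill1980strong} are used as given: $x_t$ converges almost surely to a limit $x_\infty$, and $x_\infty$ is almost surely a fixed point of $f$, i.e. $f(x_\infty)=x_\infty$. (The latter follows from summing the display: $x_T-x_0$ equals $\sum_{t<T}\frac{f(x_t)-x_t}{T_t+1}$ plus an $L^2$-bounded, hence convergent, martingale; since $\sum_t (T_t+1)^{-1}=\infty$, the drift sum converges only if $f(x_t)-x_t\to 0$.)

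Using the hypothesis, choose $\delta>0$ small enough that $f(x)>x$ for all $x\in[x'-\delta,x')$ and $f(x)<x$ for all $x\in(x',x'+\delta]$; this is possible because the strict inequalities hold on open one-sided neighborhoods of $x'$. Consequently $x'$ is the \emph{unique} fixed point of $f$ in $[x'-\delta,x'+\delta]$, and the drift is \emph{restoring} there: $(x-x')\bigl(f(x)-x\bigr)\le 0$ for every $x\in[x'-\delta,x'+\delta]$. Let $\tau=\inf\{t:\,|x_t-x'|\ge\delta\}$ and put $V_t=(x_t-x')^2$. For $t<\tau$,
\[
\mathbb{E}[V_{t+1}-V_t\mid\mathcal{F}_t]=\frac{2(x_t-x')\bigl(f(x_t)-x_t\bigr)}{T_t+1}+\mathbb{E}[(x_{t+1}-x_t)^2\mid\mathcal{F}_t]\le (T_t+1)^{-2},
\]
because the first (restoring) term is nonpositive. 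Hence, with $a_s=(T_s+1)^{-2}$ and $A=\sum_{s\ge0}a_s\le (m+k)^{-1}$, the process $Z_t=V_{t\wedge\tau}-\sum_{s<t\wedge\tau}a_s$ is a supermartingale bounded below by $-A$.

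It remains to bound the escape probability. Since $Z_t+A\ge0$ is a nonnegative supermartingale it converges almost surely to $Z_\infty$ with $\mathbb{E}[Z_\infty]\le Z_0=(x_0-x')^2$. On $\{\tau<\infty\}$ we have $V_\tau\ge\delta^2$, so $Z_\infty\ge\delta^2\mathbf{1}_{\{\tau<\infty\}}-A$ pointwise, and taking expectations gives
\[
\mathbb{P}(\tau<\infty)\le\frac{(x_0-x')^2+(m+k)^{-1}}{\delta^2}.
\]
Choosing $I=(x'-\eta,x'+\eta)$ with $\eta<\delta/2$ and then $n$ so large that $(m+k)^{-1}<\delta^2/4$ whenever $m+k\ge n$ makes the right-hand side strictly below $1$ for every $x_0\in I$; hence $\mathbb{P}(\tau=\infty)>0$. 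Finally, on $\{\tau=\infty\}$ the trajectory stays in $[x'-\delta,x'+\delta]$, whose only fixed point is $x'$, so the fact that $x_\infty$ is a fixed point forces $x_\infty=x'$. Therefore $\mathbb{P}(x_\infty=x')\ge\mathbb{P}(\tau=\infty)>0$, which is the claim.

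The step I expect to be the main obstacle is the supermartingale estimate: one must check that the restoring drift, of order $(T_t+1)^{-1}$, is not overwhelmed by the martingale fluctuations, whose conditional variance is of order $(T_t+1)^{-2}$ and hence summable. This summability is exactly what keeps $A$ finite and lets the exit-probability bound be pushed below $1$ by enlarging $m+k$; it is the quantitative heart that converts ``$x'$ is a stable rest point of $\dot x=f(x)-x$'' into a positive-probability attraction statement. The remaining ingredients---almost-sure convergence and the identification of the limit with a fixed point of $f$---are taken directly from \cite{hill1980strong}, in line with the paper's remark that these auxiliary lemmas follow from its main results.
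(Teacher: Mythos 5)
Your argument is correct. Note, however, that the paper does not actually prove this lemma: it simply asserts that it ``follows directly from the main results in'' Hill, Lane, and Sudderth \cite{hill1980strong}, where downcrossing points of the urn function are shown to be attained with positive probability. What you have done is reconstruct that argument in a self-contained way: the identity $\mathbb{E}[x_{t+1}-x_t\mid\mathcal{F}_t]=(f(x_t)-x_t)/(T_t+1)$, the restoring-drift supermartingale $V_{t\wedge\tau}-\sum_{s<t\wedge\tau}(T_s+1)^{-2}$, and the exit-probability bound $\mathbb{P}(\tau<\infty)\le\bigl((x_0-x')^2+(m+k)^{-1}\bigr)/\delta^2$ together give exactly the trapping statement, and then the a.s. convergence of $x_t$ to a fixed point of $f$ (plus uniqueness of the fixed point $x'$ in $[x'-\delta,x'+\delta]$) identifies the limit. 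A virtue of your route is that it makes explicit why the lemma must be stated with the threshold $m+k\geq n$ and the interval $I$: the escape probability is controlled by $(x_0-x')^2$ and by the tail sum $\sum_s (T_s+1)^{-2}\le (m+k)^{-1}$, both of which must be small relative to $\delta^2$ --- a dependence the bare citation obscures. One phrase is loose: ``the drift sum converges only if $f(x_t)-x_t\to 0$'' is not true for a general sequence; the correct reasoning is that $x_t\to x_\infty$ a.s. and $f$ is continuous, so $f(x_t)-x_t$ tends to the constant $f(x_\infty)-x_\infty$, which must be zero since otherwise the drift terms are eventually of one sign and $\sum_t(T_t+1)^{-1}=\infty$ forces divergence. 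With that wording tightened, the proof stands.
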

Moreover we make use of the obvious fact that if $g:[0,1]\rightarrow[0,1]$ is a piecewise constant such that $f\geq g$ and $\{y_t\}_t$ is the corresponding process with the same initial composition as $x_t$, then we can couple the two processes together such that $x_t\geq y_t$ for every $t$.

We start with a Lemma that is based on Lemma \ref{proposition:aux2}.
\begin{lemma}\label{lem:conv}
Consider an urn process and assume that the function $f$ is picewise continuous as defined above.
If $f(\tilde x)>\tilde x$ (or $f(\tilde x)<\tilde x$) for some $\tilde x\in(0,1)$, then there exist an interval $J\subseteq (0,1)$ and some $n>0$ so that for every initial condidions $(m,k)$ with $x_0\in J$ and $m+k>n$ it holds with positive probability that $\lim_{t\gor\infty} x_t\neq 0$ ($\lim_{t\gor\infty} x_t\neq 1$).
\end{lemma}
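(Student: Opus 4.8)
The plan is to prove the statement for the case $f(\tilde x)>\tilde x$; the case $f(\tilde x)<\tilde x$ then follows by the mirror symmetry $x\mapsto 1-x$, which sends the urn function $f$ to $\bar f(x)=1-f(1-x)$ (interchanging the roles of blue and red balls), turns a point with $f(\tilde x)<\tilde x$ into a point $\tilde y=1-\tilde x$ with $\bar f(\tilde y)>\tilde y$, and preserves one-sided continuity. The strategy for the main case is to squeeze a \emph{continuous} urn function $\hat f\le f$ underneath $f$ that has a downcrossing of the diagonal strictly inside $(0,1)$, apply Lemma~\ref{proposition:aux2} to $\hat f$, and then transfer the conclusion to $f$ by the monotone coupling.

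To build $\hat f$, I would first use the one-sided continuity of $f$. Writing $\eta=f(\tilde x)-\tilde x>0$ and $c=\tilde x+\eta/2$, right-continuity (the left-continuous case is symmetric, with a band to the left of $\tilde x$) yields a $\delta\in(0,\eta/2)$ such that $f(x)>c$ for all $x$ in the closed band $K=[\tilde x,\tilde x+\delta]$; since $c=\tilde x+\eta/2>\tilde x+\delta\ge x$ on $K$, we in fact have $f(x)>c>x$ throughout $K$. Now let $\hat f$ be the continuous triangular bump supported on $K$: it equals $0$ outside $K$, rises linearly from $(\tilde x,0)$ to $(\tilde x+\tfrac{\delta}{2},c)$, and falls linearly from $(\tilde x+\tfrac{\delta}{2},c)$ back to $(\tilde x+\delta,0)$. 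By construction $\hat f$ is continuous on $[0,1]$, with $0\le\hat f\le c<f$ on $K$ and $\hat f=0\le f$ off $K$, so $\hat f\le f$ everywhere. On its descending edge $\hat f$ equals $c>x$ at $x=\tilde x+\tfrac{\delta}{2}$ and equals $0<x$ at $x=\tilde x+\delta$, so it crosses the diagonal from above to below at a unique point $x'\in(\tilde x+\tfrac{\delta}{2},\tilde x+\delta)\subset(0,1)$; thus $\hat f(x)>x$ in a left neighborhood of $x'$ and $\hat f(x)<x$ in a right neighborhood of $x'$.

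With $\hat f$ in hand, Lemma~\ref{proposition:aux2} furnishes an open interval $I\ni x'$ and a constant $n$ such that, whenever $x_0\in I$ and $m+k\ge n$, the $\hat f$-process $\{y_t\}$ converges to $x'$ with positive probability. Coupling the original $f$-process $\{x_t\}$ with $\{y_t\}$ from the same initial composition — the one-step domination argument behind the stated piecewise-constant coupling fact applies verbatim to any measurable $g\le f$, in particular to the continuous $\hat f$ — gives $x_t\ge y_t$ for every $t$. Hence on the positive-probability event $\{y_t\to x'\}$ we have $\liminf_t x_t\ge x'>0$, so $\lim_t x_t\ne 0$. Taking $J=I$ (shrunk if necessary to lie in $(0,1)$) and this $n$ proves the claim. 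The main obstacle is the construction in the second step: the bump must genuinely cross the diagonal at an \emph{interior} point while remaining below $f$, even though $f$ may drop sharply — indeed vanish — just outside the band $K$; confining the crossing to the interior of $K$, where $f>c$ is guaranteed, is precisely what makes the domination and the application of Lemma~\ref{proposition:aux2} compatible. A secondary point to verify is that the monotone coupling, stated for piecewise-constant lower bounds, indeed extends to the continuous $\hat f$.
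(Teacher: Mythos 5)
Your proof is correct and follows essentially the same route as the paper's: isolate a band where $f$ exceeds the diagonal by a margin via one-sided continuity, squeeze a continuous minorant with an interior downcrossing underneath $f$, invoke Lemma~\ref{proposition:aux2}, and transfer the conclusion by the monotone coupling. The only difference is that you make explicit (via the triangular bump and the mirror symmetry $x\mapsto 1-x$) what the paper merely asserts, which is a welcome tightening rather than a new argument.
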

\begin{proof}[\textbf{Proof of Lemma \ref{lem:conv}}]
It follows from the facts that $f$ is either left- or right-continuous that there exists a closed interval $I=[a,b]\subset (0,1)$ such that $x\in I$ and $f(x)>x+\delta$ for every $x\in I$. Therefore, we can find a continuous function $g:[0,1]\rightarrow[0,1]$ and a point $x'\in I$ such that $f(y)\geq g(y)$ for every $y\in [0,1]$, and $g(y)>x$ in a left neighborhood of $x'$  and $g(y)<y$ on a right neighborhood of $x'$. It follows from Lemma \ref{proposition:aux2} that there exist an interval $J$ of $x'$ and a constant $n$ such that for every initial condition $(m,k)$ such that $x_0\in J$ the process $\{y_t\}_t$ converges to $x'$ with positive probability. Since for the corresponding process $\{x_t\}$  we have $f\geq g$, from the above comment we can couple the processes together such that $x_t\geq y_t$ for every $t$. Therefore if $m+k>n$ and $x_0\in J$, then  $\lim_{t\gor\infty} x_t\geq x'>0$ with positive probability.   
\end{proof}

\begin{proof}[\textbf{Proof of Theorem \ref{th:main}}]
We note that if the prior is $\mu$, then it follows directly from Bayes rule that the conditional probability of $\omega=1$ given a signal $s\in S$ is
$$p_\mu(s)=\p_{F,\mu}(\omega=1|s)=\frac{p(s)\mu}{p(s)\mu+(1-p(s))(1-\mu)}.$$
It follows from standard martingale considerations that
$$E_{F,\mu}[p_\mu(s)]=\mu.$$

Consider a prior $\mu$ and let $G^\mu$ be the posterior belief distribution. That is, for every measurable $A\subseteq[0,1],$ $$G_\mu(A)=\p_{F,\mu}(p_\mu(s)\in A).$$ Similarly, for every state $\omega$ we let  
$$G_\mu^\omega(A)=\p_{F,\mu}(p_\mu(s)\in A|\omega)$$
be the posterior distribution conditional on state $\omega$. Let $\underline \beta_\mu, \overline \beta_\mu\in[0,1]$ be the lower point and the upper point of the support of $p(s)$. 
Since $\frac{dG_\mu^1}{dG_\mu}(x)=\frac{x}{\mu}$ and $\frac{dG_\mu^0}{dG_\mu}(x)=\frac{1-x}{1-\mu}$ at every point in the support of $p_\mu(s)$ (see, e.g., Lemma 1 in \cite{arieli2020identifiable}), it follows that $G_1([0,x])<G_0([0,x])$ for every $x\in \underline (\beta_\mu, \overline \beta_\mu)$. Thus $G^\mu_1$ first-order stochastically dominates $G^\mu_0$.
We further note that, by definition, $E_{F,\mu}[p_\mu(s)]=\int_{[0,1]} x dG_\mu(x)$, and $E_{F,\mu}[p_\mu(s)|\omega]=\int_{[0,1]} x dG^\omega_\mu(x)$.
By the law of iterated expectation, 
\begin{align*}
&\mu=E_{F,\mu}[p_\mu(s)]=\mu E_{F,\mu}[p_\mu(s)|\omega=1]+(1-\mu)E_{F,\mu}[p_\mu(s)|\omega=0].
\end{align*}
Since $E_{F,\mu}[p_\mu(s)|\omega=1]>E_{F,\mu}[p_\mu(s)|\omega=0]$ we must have that
$$E_{F,\mu}[p_\mu(s)|\omega=1]>\mu>E_{F,\mu}[p_\mu(s)|\omega=0].$$

We note that for the strategy $\sigma$ the learning process conditional on any state $\omega\in\Omega$ is an urn process. In particular, in the case $\sigma(x)=x$, for every proportion of $\frac{m}{m+k}=x_t$ agents who play action $1$, agent $t$ plays action $a_t=1$ with probability 
$f(x_t)=E_{F,x_t}[p_{x_t}(s)|\omega]$, which is a continuous function of $x_t$. Thus, in particular, conditional on state $\omega=1$,  the probability that agent $t$ plays action $1$ is $E_{F,x_t}[p_{x_t}(s)|\omega=1]>x_t$. We further note that $E_{F,x_t}[p_{x_t}(s)|\omega=1]$ is a continuous function of $x_t$ since $p_{x_t}(s)$ is a continuous function of $x_t$ for any signal $s$. Thus it follows from the auxiliary Lemma \ref{proposition:aux} that the proportion $x_t$ converges to $1$ conditional on state $\omega=1$. Similarly, since $E_{F,x_t}[p_{x_t}(s)|\omega=0]<x_t$ the process $x_t$ converges to $0$ conditional on state $\omega=0$. Note that $f(x)=E_{F,\mu}[p_{x}(s)|\omega]$ approaches $0$ as $x$ approaches $0$ and $f(x)=E_{F,\mu}[p_{x}(s)|\omega]$ approaches $1$ as $x$ approaches $1$ for every state $\omega$. Therefore since $\lim_{t\gor\infty}x_t=\omega$ with probability one, we have that the probability of $a_t=\omega$ approaches one as $t$ goes to infinity. Hence asymptotic learning holds.         

We next show the converse direction. We need to show that if $\sigma\neq id$, then for any prior $\mu\in (0,1)$ there exists a state $\omega$ such that conditional on state $\omega$ it holds that $\lim_{t\gor\infty}x_t\neq \omega$ with positive probability. 

Since $\sigma\neq id$ it holds that $\sigma(\tilde x)\neq \tilde x$ for some $\tilde x\in(0,1)$. Therefore, either $\sigma(\tilde x)>\tilde x$ or $\sigma(\tilde x)<\tilde x$. Without loss of generality consider the first case. Since $\sigma$ is either left- or right-continuous at $\tilde x$, we can find an interval $I\subset (0,1)$ and a constant $\eta$ such that $\sigma(x)>x+\delta$ for every $x\in I$.

We recall that conditional on state $\omega$ action $1$'s proportion  $\{x_t\}_{t\in\mathbb{N}}$ is a generalized urn process where the probability of playing action $1$ is $f(x)=E_{F,\mu}[\sigma(p_x(s))|\omega=0]$. To see this, note that since for every $x$ the function $p_x(s)$ is continuous and increasing in $x$, it follows that if $\sigma$ is right-continuous at any point so is $f$ (and similarly with respect to left continuity). 

We claim that we can find an information structure $F$ and a point $x\in(0,1)$ such that 
$E_{F,\mu}[\sigma(p_x(s))|\omega=0]>x$. 

To see this, consider a binary information structure $F$ with $S=\{s_0,s_1\}$ and $p(s_\omega|\omega)=\frac{1}{2}+\epsilon$ for every $\omega$. It follows from Bayes' rule that $p_x(s_1)=\frac{(\frac{1}{2}+\epsilon)x}{(\frac{1}{2}+\epsilon)x+(\frac{1}{2}-\epsilon)(1-x)}$ and $p_x(s_0)=\frac{(\frac{1}{2}-\epsilon)x}{(\frac{1}{2}-\epsilon)x+(\frac{1}{2}+\epsilon)(1-x)}$.
Moreover, conditional on state $\omega=0$, the variable $p_x$ equals $p_x(s_1)$ with probability $\frac{1}{2}-\epsilon$ and $p_x(s_0)$ with probability $\frac{1}{2}+\epsilon$. 

We note that  $\lim_{\epsilon\gor 0} p_x(s_\omega)=x$. Therefore we can find $\eta>0$ and $x\in I$ such that $\sigma(p_x(s_\omega))>x$ for $\omega=0,1$. This implies in particular that $f(x)=E_{F,x}[\sigma(p_x(s))|\omega=0]>x.$ 

It therefore follows from Lemma \ref{lem:conv} that there exist an open interval $J\subset (0,1)$ and $n$ such that if $m+k>n$ and $x_0\in J$, then $\{x_t\}$ does not converge to $0$ with positive probability. 

First we may assume that, conditional on state $\omega=1$, it holds that $\lim_{t\rightarrow\infty} x_t=1$ with positive probability. Otherwise $(d,\sigma)$ does not satisfy asymptotic learning. Therefore, conditional on state $\omega=0$ for every $n'$ and $\epsilon>0$ there exists a time $t$  such that $x_t\geq 1-\epsilon$ and $t\geq n'$. This follows from the fact that any finite sequence of actions that has a positive probability conditional on state $\omega=1$. This holds true since $F_0$ and $F_1$ are absolutely continuous with respect to each other.

We note that for small enough $\epsilon$ the value $n'$ must be arbitrarily large. Hence, for small enough $\epsilon$, if $x_t$ reaches a point $x_t\geq 1-\epsilon$, then if $x_{t}$ crosses the interval $J$ from right to left we must have a time $t'$ such that $x_t\in J$ and $t>n$. In such a case we have that $\{x_t\}$ does not converge to $0$ with positive probability. Thus either the process crosses $x_t\geq 1-\epsilon$ and after that does not cross $J$ with positive probability or $x_t\in J$ and $t>n$. In any case $\{x_t\}$ does not converge to $0$ with positive probability. Hence asymptotic learning fails.

%From the fact that $\sigma$ is either right or left continuous at $x$ we can find an interval $I$ and a small constant $\delta$ so that either $\sigma(y)>y+\delta$ for any $y\in I$ or $\sigma(y)<y-\delta$ for any $y\in I$.

Without loss of generality we consider the first case. We claim that we can find an information structure $F$ and an interval $J$ such that 
$E_{F,\mu}[\sigma(p_\mu(s))|\omega=0]>\mu+\eta$ for every $\mu\in J$ and some $\eta>0$. 

To see this, consider a binary information structure $F$ with $S=\{s_0,s_1\}$ and $p(s_\omega|\omega)=\frac{1}{2}+\epsilon$ for every $\omega$. It follows from Bayes' rule that $p_x(s_1)=\frac{(\frac{1}{2}+\epsilon)x}{(\frac{1}{2}+\epsilon)x+(\frac{1}{2}-\epsilon)(1-x)}$ and $p_x(s_0)=\frac{(\frac{1}{2}-\epsilon)x}{(\frac{1}{2}-\epsilon)x+(\frac{1}{2}+\epsilon)(1-x)}$.
Moreover, conditional on state $\omega=0$, the variable $p_x$ equals $p_x(s_1)$ with probability $\frac{1}{2}-\epsilon$ and $p_x(s_0)$ with probability $\frac{1}{2}+\epsilon$. 

We note that  $\lim_{\epsilon\gor 0} p_x(s_\omega)=x$. 
Therefore we can find $\eta>0$ and $x\in I$ such that $\sigma(p_x(s_\omega))>x$ for $\omega=0,1$. This implies in particular that $f(x)=E_{F,x}[\sigma(p_x(s))|\omega=0]>x.$ 

It now follows from Lemma \ref{lem:conv} that there exist an open interval $J\subset (0,1)$ and $n$ such that if $m+k>n$ and $x_0\in J$, then $\{x_t\}$ does not converge to $0$ with positive probability. 

First we may assume that, conditional on state $\omega=1$, it holds that $\lim_{t\rightarrow\infty} x_t=1$ with positive probability. Otherwise $(d,\sigma)$ does not satisfy asymptotic learning. Therefore, conditional on state $\omega=0$ for every $n'$ and $\epsilon>0$ there exists a time $t$  such that $x_t\geq 1-\epsilon$ and $t\geq n'$. This follows from the fact that any finite sequence of actions that has a positive probability conditional on state $\omega=1$ also has a positive probability conditional on state $\omega=0$. This holds true since $F_0$ and $F_1$ are absolutely continuous with respect to each other.

We note that for small enough $\epsilon$ the value $n'$ must be arbitrarily large. Hence, for small enough $\epsilon$, if $x_t$ reaches a point $x_t\geq 1-\epsilon$, then if $x_{t}$ crosses the interval $J$ from right to left we must have a time $t'$ such that $x_t\in J$ and $t>n$. In such a case we have that $\{x_t\}$ does not converge to $0$ with positive probability. Thus, either the process crosses $x_t\geq 1-\epsilon$ and after that does not cross $J$ with positive probability or $x_t\in J$ and $t>n$. In any case $\{x_t\}$ does not converge to $0$ with positive probability. Hence, since the proportion $x_t$ does not converge to zero, this implies that $\limsup_{t\rightarrow\infty}\p_{F,b,\sigma}(a_t=1|\omega=0)>0$ and asymptotic learning fails.

\end{proof}
We next turn to the proof of Theorem \ref{th:gl}.
\begin{proof}[\textbf{Proof of Theorem \ref{th:gl}}]
Recall that the variance of $P_t$ is $v_t$ and $\inf_t v_t=v>0$. Hence there exists an information structure $F$ that is Blackwell-dominated $F_t$ for every $t$. This can be shown by constructing a binary information structure $F$ whose corresponding belief distribution function $P$ is a \emph{mean preserving contraction} of $P_t$ for every $t$. (That is, $P_t$ is a mean-preserving spread of $P$.) Such a binary information structure exists since $v_t\geq v$ for every $t$.

Consider the learning process both under $P_t$ and under $P$ conditional on state $\omega=1$. It follows from $\frac{dP_t^1}{dP_t}(y)=\frac{y}{\mu}$ (again see, e.g., Lemma 1 in \cite{arieli2020identifiable}) that conditional on $\omega=1$ and the observed proportion $x_t=x$, the probability of  choosing action $1$ equals
$$f_t(x)=\int \frac{2xy^2}{xy+(1-x)(1-y)}\mathrm{d}P_t(y).$$ And, similarly, for $P$ this probability equals $f(x)=\int \frac{2y^2}{xy+(1-x)(1-y)}\mathrm{d}P(x).$ Since $\frac{2y^2}{xy+(1-x)(1-y)}$ is convex in $y$ and since $P_t$ is a mean-preserving spread of $P$ it follows that $f_t(x)\geq f(x)$ for every $x$. It follows from Theorem \ref{th:main} that under $P$ the proportion of actions $1$ converges to $1$ as time goes to infinity. Therefore the coupling argument mentioned after the auxiliary lemmas implies that the proportion of actions $1$ converges to $1$ also with respect to the generalized processes and hence the probability $a_t=1$ converges to one conditional on state $\omega=1$. A similar derivation holds for $\omega=0$ with respect to $a_t=0$. Therefore asymptotic learning holds.

%Next let $G_t$ be the CDF of $P_t$ and let $G$ be the CDF of $P$. A simple calculation shows that the probability of choosing action $1$ conditional on state $\omega=1$ and prior  
\end{proof}

\begin{proof}[\textbf{Proof of Theorem \ref{th:eyster}}]
 %Clearly, without loss of generality we can restrict attention to binary information structures $\{s_0,s_1\}$ with $p(s_0)=r<\frac{1}{2}$ and $p(s_1)=q>\frac{1}{2}$. 
 Assume by way of contradiction that $(e,\sigma)$ satisfies asymptotic learning for some information structure $F$. We first claim that $\lim_{x\gor 1}\sigma(x)=\sigma(1)=1$ (and similarly $\lim_{x\gor 0}\sigma(x)=\sigma(0)=0$). To see this, consider the case where the realized state is $\omega=1$. Since asymptotic learning holds, we must have that $\p(a_t=1|\omega=1)$ converges to one. Therefore, as time goes to infinity, the proportion of actions $1$ agent $t$ observes converges to one (in probability). Hence, if we let $m_t$ and $k_t$ be the number of $1$'s and $0$'s in the realized observation set of $t$, it follows that $e(k_t,m_t)$ approaches one. This implies that the posterior probability  $\frac{p(s_t)e(k_t,m_t)}{p(s)e(k_t,m_t)+(1-p(s_t))(1-e(k_t,m_t))}$ of agent $t$ also converges to one in probability.
  Since agent $t$ chooses action $1$ with probability $\int_S \sigma\Big(\frac{p(s)e(k_t,m_t)}{p(s)e(k_t,m_t)+(1-p(s))(1-e(k_t,m_t))}\Big)dF_1(s)=\p(a_t=1|\omega=1,e(k_t,m_t))$, which approaches one with $t$, we must have that $\lim_{x\gor 1}\sigma(x)=\sigma(1)=1$.  
 
 We note that since $F_0$ and $F_1$ are mutually absolutely continuous with respect to each other, and since $\lim_{x\gor 1}\sigma(x)=1$ we must have that $\p(a_t=1|e(k_t,m_t),\omega=0)=\\ \int_S \sigma\Big(\frac{p(s)e(k_t,m_t)}{p(s)e(k_t,m_t)+(1-p(s))(1-e(k_t,m_t))}\Big)dF_0(s)$ approaches one as $e(k_t,m_t)$ goes to one. This implies that for every value $c\in(0,1)$ there exists $b\in(0,1)$ such that if $e(k_t,m_t)>b$, then $\p(a_t=1|e(k_t,m_t),\omega=0)>c$.
 
 %Consider a biased simple random walk with a success probability of $\frac{3}{4}$ starting at $1$. Thus at every stage the random walk position increases by $1$ with probability  $\frac{3}{4}$ and it decreases by $1$ with probability $\frac{1}{4}$. It follows using a simple recursive argument that the probability that the random walk will converge to infinity before ever reaching $0$ is $\frac{2}{3}.$
 %Our next goal is to couple the our learning process that is obtained when the realized state is $\omega=0$ with a random walk as above in a way that if the random walk converges to infinity before ever crossing the origin, then the public belief of the learning process $e(k_t,m_t)$ converges to $1$. This will show the desired contradiction. 
 
 To conclude the proof, we couple our process of actions conditional on the realized state $\omega=0$ with a biased random walk such that if the random walk converges to infinity before ever crossing the origin, then the public belief of the learning process $e(k_t,m_t)$ converges to $1$. This will establish the desired contradiction. First consider a biased simple random walk with a success probability of $\frac{3}{4}$ starting at $1$. Thus at every stage the random walk position increases by $1$ with probability  $\frac{3}{4}$ and it decreases by $1$ with probability $\frac{1}{4}$. It follows using a simple recursive argument that the probability that the random walk converges to infinity before ever reaching $0$ is $\frac{2}{3}.$
 
 To apply the random walk argument to our action process, let $l=\Big\lceil\frac{\ln(\frac{r}{1-r})}{\ln(\frac{q}{1-q})}\Big\rceil$. Note that $l$ is a positive integer since $r<\frac{1}{2}$ and $l>\frac{1}{2}$. Choose $b$ such that $\p(a_t=1|e(k_t,m_t)\geq b,\omega=0)>(3/4)^{\frac{1}{l}}:=v.$  
 %Let $b'=\frac{b(1-r)}{b(1-r)+(1-b)r}.$ 
 %We claim that if $\omega=0$ and   $e(k_{t'},m_{t'})>b$ for some $t'$, then the chance that $\lim_{t\gor\infty}e(k_t,m_t)=1$ and $e(k_{t'},m_{t'})>b$ for every $t\geq t'$ is at least $\frac{2}{3}.$
 %To see this consider a biased simple random walk with a success probability of $\frac{3}{4}$ starting at $1$. Thus at every stage the random walk position increases by $1$ with probability  $\frac{3}{4}$ and it decreases by $1$ with probability $\frac{1}{4}$. It is known (See ??) that the probability that the random walk will converge to infinity before ever reaching $0$ is $\frac{2}{3}.$ 
 Take a time $t'$ such that $e(k_{t'},m_{t'})>b$. As long as $e(k_{t'},m_{t'})>b$, the probability that $a_t=1$ will be played for $l$ consecutive periods is, by assumption, at least $[(\frac{3}{4})^{\frac{1}{l}}]^l=\frac{3}{4}$. Moreover, the probability of choosing $a_t=0$ is less than $\frac 1 4$.  
 
 We now define a biased random walk $\{Z_n\}_{n\geq 0}$ and a corresponding increasing sequence of times $\{t(n)\}_{n\geq 0}$ as follows. We let  $t(0)=t'$ and let $Z_0=1$. If starting at time $t'$ the action $a_{t}=1$ was played for $l$ consecutive periods, i.e., $a_t=1$ for $t=t',\ldots,t'+l-1$, we let $Z_1=2$ and $t(1)=t'+l$. Otherwise, we let $Z_1=0$ and set $t(1)=\tilde t +1$, where  $\tilde t \in\{t',\ldots,t'+l-1\}$ is the first time $t$ such that $a_t=0$. Inductively, given that $Z_k$ is an integer and $t(k)$ is defined, we set $Z_{k+1}=Z_k+1$, and $t(k+1)=t(k)+l$ if $a_t=1$ was played in every time period $t=t(k),\ldots,t(k)+l-1$. Otherwise we let $Z_{k+1}=Z_k-1$, and $t(k+1)$ is defined by $\tilde t+1$ where $\tilde t\in\{t(k),\ldots,t(k)+l-1\}$ is the first time such that  $a_t=0$. 
 
 We call a sequence of realizations $\{z_n\}_{n\geq 0}$ of the random walk \emph{good}     
 if it does not reach zero. As mentioned above, the probability of a good sequence is at least $\frac{2}{3}$. We claim that if the sequence is good, then  $e(k_{t},m_{t})>b$ for every $t\geq t'$. Since $z_n=j>0$ for every $n$ it follows by definition that $k_t-k_{t'}\geq (m_t-m_{t'})l$ for every time $t$. We note that
 $e(k,m)=\frac{1}{1+(\frac{1-q}{q})^k(\frac{1-r}{r})^m}$. Therefore, for every $t\geq t'$ we can write
 \begin{align}
\notag&e(k_t,m_t)=\frac{1}{1+(\frac{1-q}{q})^{k_{t'}}(\frac{1-r}{r})^{m_{t'}}(\frac{1-q}{q})^{k_t-k_{t'}}(\frac{1-r}{r})^{m_t-m_{t'}}}\\
\label{eq:inp}&\geq  \frac{1}{1+(\frac{1-q}{q})^{k_{t'}}(\frac{1-r}{r})^{m_{t'}}(\frac{1-q}{q})^{({m_t-m_{t'}})l}(\frac{1-r}{r})^{m_t-m_{t'}}},
 \end{align}
 where inequality \eqref{eq:inp} follows since $\frac{(1-q)}{q}<1$ and $k_t-k_{t'}\geq (m_t-m_{t'})l$.
 Thus replacing $(\frac{1-q}{q})^{({k_t-k_{t'}})}$ with $(\frac{1-q}{q})^{({m_t-m_{t'}})l}$ increases the denominator and therefore decreases the expression.
 Taking log of  $(\frac{1-q}{q})^{({m_t-m_{t'}})l}(\frac{1-r}{r})^{m_t-m_{t'}}$ gives
 $$({m_t-m_{t'}})l\ln(\frac{1-q}{q})+({m_t-m_{t'}})\ln(\frac{1-r}{r}).$$
 Since $l=\Big\lceil\frac{\ln(\frac{r}{1-r})}{\ln(\frac{q}{1-q})}\Big\rceil\geq \frac{\ln(\frac{r}{1-r})}{\ln(\frac{q}{1-q})}$ and since $\ln(\frac{r}{1-r})$ is negative we have that
 $$({m_t-m_{t'}})l\ln(\frac{1-q}{q})+({m_t-m_{t'}})\ln(\frac{1-r}{r})\leq 0.$$
 Therefore $(\frac{1-q}{q})^{({m_t-m_{t'}})l}(\frac{1-r}{r})^{m_t-m_{t'}}\leq 1$. Integrating this inequality into \eqref{eq:inp} gives
 $e(k_t,m_t)\geq e(k_{t'},m_{t'})$, as desired.
 
 Thus, overall we have shown that if conditional on state $\omega=0$ the history reaches a time period $t'$ where $e(k_{t'},m_{t'})>b,$ then $e(k_{t},m_{t})>b,$ for every $t\geq t'$ with probability at least $\frac{2}{3}$. Note that, by the contradiction assumption, conditional on state $\omega=1$ the history reaches a time period $t'$ where $e(k_{t'},m_{t'})>b$ with probability one. Since $F_0$ is absolutely continuous with respect to $F_1$ and since existence of a time period $t'$ where $e(k_{t'},m_{t'})>b$ depends on finitely many signal realizations, we conclude that conditional on state $\omega=0$ there exists a positive probability of reaching a time period $t'$ such that $e(k_{t'},m_{t'})>b$. This is a contradiction since $\lim_{t\gor\infty}\p(a_t=0|\omega=0)=1$ only when $e(k_{t},m_{t})$ converges to zero in probability, conditional on state $\omega=0$. 
 
 %This means that there exists a value $c\in(0,1)$ such that $\sigma(x)\geq\frac{3}{4}$. We note that $e(k,m)=h(k-m)$ is a function of the difference $k-m$. In addition, $e(k,m)$ approaches $1$ iff $k-m$ approaches infinity and  $e(k,m)$ approaches $0$ iff $k-m$ approaches minus infinity

\end{proof}

\begin{comment}
We turn to prove Theorem \ref{th:generalizedl}.
\begin{proof}[\textbf{Proof of Theorem \ref{th:generalizedl}}]
We note that since $\sigma$ is either right or left continuous and $g$ is continuous it follows that $\sigma\circ g$ is either left or right continuous.
We have shown in the proof of Theorem \ref{th:main} that a necessary and sufficient condition for asymptotic learning is that for every 

We next show that if $g(x)\neq \frac{xc}{xc+(1-x)(1-c)}$, then
\end{proof}
\end{comment}

\bibliographystyle{plain}
	\bibliography{herding}
	
\end{document}